\documentclass[preprint]{elsarticle}

\usepackage{tikz}
\usetikzlibrary{intersections,calc,arrows.meta}

\usepackage[hang,small,bf]{caption}
\usepackage[subrefformat=parens]{subcaption}
\usepackage{ascmac}

\usepackage{pdfpages} 

\usepackage{bm}
\usepackage{amsmath}
\usepackage{mathrsfs}
\usepackage{amssymb}
\usepackage{booktabs}
\usepackage{comment}
\excludecomment{longcomment}
\usepackage{soul}
\usepackage{mathtools}
\usepackage{multirow}
\usepackage{graphicx}
\usepackage{pgfplots}
\pgfplotsset{compat=1.18}
\usepackage{adjustbox}

\numberwithin{equation}{section}

\usepackage{amsthm}

\newtheorem{thm}{Theorem}[section]
\newtheorem{lem}[thm]{Lemma}
\newdefinition{rem}{Remark}[section]

\newtheorem*{mainthm}{Main Theorem}
\newtheorem{prop}{Proposition}[section]

\theoremstyle{definition}
\newtheorem{dfn}{Definition}[section]
\theoremstyle{definition}

\theoremstyle{definition}
\newtheorem{ass}{Assumption}

\journal{Neural Networks}
\begin{document}

\begin{frontmatter}


  \title{Upper Bounds for Local Learning Coefficients of Three-Layer Neural Networks}
  \author{Yuki Kurumadani} 
  \ead{kurumadani@sigmath.es.osaka-u.ac.jp}
  \affiliation{organization={The University of Osaka},
            addressline={1 Chome-3 Machikaneyamacho}, 
            city={Toyonaka},
            postcode={Osaka 560-8531}, 
            country={Japan}}
\begin{abstract}
  Three-layer neural networks are known to form singular learning models, and their Bayesian asymptotic behavior is governed by the learning coefficient, or real log canonical threshold. 
  Although this quantity has been clarified for regular models and for some special singular models, broadly applicable methods for evaluating it in neural networks remain limited.
  
  Recently, a formula for the local learning coefficient of semiregular models was proposed, yielding an upper bound on the learning coefficient. 
  However, this formula applies only to nonsingular points in the set of realization parameters and cannot be used at singular points. 
  In particular, for three-layer neural networks, the resulting upper bound has been shown to differ substantially from learning coefficient values already known in some cases.

  In this paper, we derive a formula for an upper bound on local learning coefficients at a class of singular realization parameters in three-layer
neural networks.
  This formula can be interpreted as a counting rule under budget, demand, and supply constraints.
  In the non-polynomial real-analytic case, the formula applies in general settings,
  whereas in the polynomial case it applies under the restriction that the true distribution has no hidden units.
  In particular, our result covers activation functions such as the swish function
  and also includes polynomial activation functions under the above restriction,
  thereby extending previous results to a broader class of activation functions.

  We further show that, when the input dimension is one, the numerical value given by the right-hand side of our upper-bound formula agrees with the previously known learning coefficient, 
  thereby providing a useful comparison with known exact results.  
  Our result also provides a systematic perspective on how the weight parameters of three-layer neural networks affect the learning coefficient.

\end{abstract}  
  \begin{keyword}
    three-layer neural networks \sep singular learning theory \sep real log canonical threshold \sep algebraic geometry
  \end{keyword}
\end{frontmatter}

\section{Introduction}\label{sec_intro}
  Neural networks are important statistical models that are widely used in real data analysis.
  At the same time, they are singular learning models and exhibit properties that cannot be treated within classical theory.
  For example, in such models, the Fisher information matrix may fail to be positive definite, and therefore conventional information criteria such as AIC \cite{akaike1974} and BIC \cite{schwarz1978} cannot be directly applied to model selection.
  
  For such singular models, Watanabe \cite{watanabe2009} established a theory of learning coefficients using algebraic geometry, and Drton and Plummer \cite{drton2017} introduced the information criterion sBIC based on this theory.
  These studies have played an important role in model selection for singular models.
  
  Learning coefficients are not limited to model selection, but have also been used as tools for connecting singular learning theory with neural-network training dynamics and internal structure.
  Lau et al. \cite{Lau2025} introduced the Local Learning Coefficient (LLC) as a singularity-aware measure of effective complexity for deep neural networks, developed a scalable estimator, and empirically demonstrated that LLC estimates can capture differences in effective complexity associated with training heuristics.
  Hoogland et al. \cite{Hoogland2025} estimated the LLC throughout transformer training and showed that changes in loss landscape degeneracy, as quantified by the LLC, can be used to identify developmental stages, many of which coincide with interpretable changes in internal computational structure and input/output behavior.
  Wang et al. \cite{Wang2025} introduced refined Local Learning Coefficients, and used them to analyze how attention heads differentiate and specialize during training in transformer language models.

  Many of these studies rely on estimated local learning coefficients,
  and analytically available specific values are useful for interpreting and validating such estimates.
  However, determining the learning coefficient for a given statistical model is generally difficult and requires model-specific mathematical analysis.
  Therefore, it is also meaningful to derive explicit upper bounds.
  Such bounds provide theoretical indicators that bound the effective complexity of the model from above
  and give reference information for complexity assessment in model comparison and model selection,
  as well as for interpreting numerically estimated learning coefficients.
  
  For neural networks, Aoyagi and Watanabe \cite{Aoyagi2005} and Aoyagi \cite{Aoyagi2024} obtained learning coefficients for reduced-rank regression models and deep linear neural networks.
  For three-layer neural networks, Aoyagi \cite{Aoyagi2006,Aoyagi2009,Aoyagi2013,Aoyagi2019a,Aoyagi2019b} derived learning coefficients or their upper bounds in cases related to Vandermonde matrix type singularities, and Aoyagi \cite{Aoyagi2025} further obtained the learning coefficient for ReLU activation functions.
  Nevertheless, broadly applicable formulas for learning coefficients of three-layer neural networks
  with real-analytic activation functions beyond the special cases treated in previous studies
  remain largely unknown.
  
  Kurumadani \cite{kurumadani2025a,kurumadani2025b} defined a relatively simple class of singular models called semiregular models and derived a formula for the local learning coefficient at nonsingular points of the set of realization parameters.
  This result yields an upper bound on the learning coefficient of three-layer neural networks.
  On the other hand, there are cases in which this evaluation shows a large discrepancy from learning coefficient values that are already known \cite[Example 5.2]{kurumadani2025b}, suggesting that it is necessary to evaluate the local learning coefficient at singular points.

To address this issue, we derive a formula for an upper bound on the local learning coefficient
for three-layer neural networks with real-analytic activation functions.
The purpose of this paper is to give upper bounds for local learning coefficients
at singular realization parameters, where the previous formula for semiregular models
is not directly applicable.
The originality of the result lies in the fact that the upper bound is expressed
explicitly in terms of the Taylor expansion of the log-likelihood ratio function,
the rank of the Fisher information matrix, and the numbers of weight parameters
attached to redundant hidden units.

This formula covers a broader class of activation functions than those treated in previous work.
The non-polynomial real-analytic case includes, for example, the swish function,
whereas the polynomial case is treated under the restriction that the true distribution has no hidden units.
Moreover, the upper bound can be interpreted as an intuitive rule for the maximum number of items
that can be purchased under demand, supply, and budget constraints.
Furthermore, comparison with known local learning coefficients shows that,
when the number of units in the input layer is $1$, the numerical value given by the right-hand side
of the present upper-bound formula agrees with the local learning coefficient
obtained in the corresponding previously studied case.
On the other hand, when the number of units in the input layer is $2$ or more,
there are cases corresponding to reduced-rank regression in which the upper bound
does not coincide with the local learning coefficient,
showing that the inequality in the Main Theorem can be strict.

  This paper is organized as follows.
  In Section~\ref{sec:mainthm}, we present a formula for an upper bound on the local learning coefficient and apply it to three-layer neural networks.
  In Section~\ref{exsec01}, we illustrate applications of these results to three-layer neural networks and demonstrate consistency with previous research.
  In Section~\ref{sec_pf_mainthm}, we prove the main result, and in Section~\ref{sec_conclusion}, we conclude.

\subsubsection*{Notation and assumptions}
  Throughout this paper, we consider a statistical model $p(x|\theta)$ with a continuous parameter $\theta=(\theta_1,\ldots,\theta_d)\in \Theta(\subset \mathbb{R}^d)$ $(d\geq 1)$, and we denote the true distribution by $q(x)$.
  We assume that $q(x)>0$ for $q$-almost every $x$.
  We assume that the statistical model is \emph{realizable}, that is, there exists a parameter
  $\theta^*$ such that 
  $q(x)=p(x|\theta^*)$ for $q$-almost every $x$.
  We call such a parameter $\theta^*$ a \emph{realization parameter}, and we denote by
  $\Theta^*$ the set of all realization parameters.
  We assume that the prior density \(\varphi(\theta)\) has compact support in the parameter region \(\Theta\).
  Restricting \(\Theta\) to this support if necessary, we may assume that \(\Theta\) is compact.
  We denote by \(\Theta^*\) the set of realization parameters in this restricted region, and assume that \(\varphi(\theta^*)>0\) for every \(\theta^*\in\Theta^*\).
  Let $X$ be a random variable distributed according to the true distribution $q$, and let
  $\mathbb{E}_X[\cdot]$ denote expectation with respect to $X$.
  In this paper, we assume that expectation and partial differentiation with respect to \(\theta\) can be interchanged.

  We define the log-likelihood ratio function \(f\) and the Kullback--Leibler divergence \(K(\theta)\) by
  \[
  f(x|\theta):=\log \frac{q(x)}{p(x|\theta)},\qquad
  K(\theta):=\mathbb{E}_X\left[\log \frac{q(X)}{p(X|\theta)}\right].
  \]
  We assume that \(K(\theta)\) is continuous on \(\Theta\), and that, in a neighborhood of each
  \(\theta^*\in\Theta^*\), the map \(\theta\mapsto f(X|\theta)\) is an
  \(L^2(q)\)-valued analytic function.
  Then \(K(\theta)\) is also analytic in a neighborhood of each \(\theta^*\in\Theta^*\), and
  \(\Theta^*=K^{-1}(0)\) is compact.
  Furthermore, we define the Fisher information matrix at $\theta=\theta^*$ by
  \[
  I:=\mathrm{Cov}\left(\left.\frac{\partial f(X|\theta)}{\partial\theta_i}\right|_{\theta=\theta^*},
  \left.\frac{\partial f(X|\theta)}{\partial\theta_j}\right|_{\theta=\theta^*}\right)_{i,j=1,\ldots,d},
  \]
  and we denote the rank of this matrix by $r$.
  
  We say that random variables $X_1,\ldots,X_n$ are linearly independent if
  \[
  c_1,\ldots,c_n\in\mathbb{R},\quad\sum_{i=1}^n c_iX_i=0~q\text{-a.s.} ~ \Rightarrow\ c_1=\cdots=c_n=0.
  \]
  
  \begin{thm}[Resolution of singularities]\label{resolution thm}\cite{hironaka1964}\cite[Theorem 2.3]{watanabe2009}
    Let $F(x)$ be a real analytic function defined in a neighborhood of the origin in $\mathbb{R}^d$ that is not identically zero, and suppose that $F(0)=0$.
    Then one can find an open set $W\subset \mathbb{R}^d$ containing the origin,
    a real analytic manifold $U$, and a proper analytic map $g:U\to W$
    such that the following conditions hold:
    \begin{itemize}
      \item[(1)] Let $W_0:=F^{-1}(0)$ and $U_0:=g^{-1}(W_0)$. Then $g:U\setminus U_0\to W\setminus W_0$ is an analytic isomorphism.
      \item[(2)] For any point $Q\in U_0$, we can take local coordinates $u=(u_1,\ldots,u_d)$ on $U$ with origin at $Q$ such that
        \begin{align}\label{eq:normalcrossing}
          F(g(u))=a(u)u_1^{k_1}u_2^{k_2}\cdots u_d^{k_d},\\
          \left|g^{\prime}(u)\right|=\left|b(u)u_1^{h_1}u_2^{h_2}\cdots u_d^{h_d}\right|,\notag
        \end{align}
        where $k_i,h_i$ $(i=1,\ldots,d)$ are nonnegative integers, and $a(u),b(u)$ are real analytic functions satisfying $a(u)\neq 0$ and $b(u)\neq 0$.
    \end{itemize}
  \end{thm}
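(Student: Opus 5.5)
This is Hironaka's theorem, cited from \cite{hironaka1964} and \cite[Theorem 2.3]{watanabe2009}, and the paper imports it rather than proving it. What I would do is reduce the stated local form to the general resolution theorem for hypersurfaces in a real analytic manifold. I would not try to reprove resolution from scratch: Hironaka's induction on dimension, with its invariants and maximal-contact hypersurfaces, is far beyond the scope of this paper.

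\textbf{Step 1: embedded resolution.} Take a neighborhood $W$ of the origin on which $F$ is analytic and not identically zero. By Hironaka's embedded resolution (in the analytic category, e.g.\ Aroca--Hironaka--Vicente, or Bierstone--Milman's canonical version), there is a proper analytic map $g:U\to W$ with the following properties:
\begin{itemize}
\item it is a finite composition of blow-ups along smooth centers contained in the zero locus, after shrinking $W$ to a relatively compact neighborhood;
\item the total transform $g^{-1}(W_0)$ is a simple normal crossing divisor.
\end{itemize}
Since every center lies in $W_0=F^{-1}(0)$ and blow-ups are isomorphisms away from their centers, $g$ restricts to an analytic isomorphism $U\setminus U_0\to W\setminus W_0$. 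This gives (1).

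\textbf{Step 2: normal crossing of $F\circ g$.} At a point $Q\in U_0$, the normal crossing property gives local coordinates $u$ centered at $Q$ in which the ideal generated by $F\circ g$ is monomial. That is,
\[
F(g(u))=a(u)\,u_1^{k_1}\cdots u_d^{k_d}, \qquad a(Q)\neq 0,
\]
so $a$ is nonvanishing near $Q$.

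\textbf{Step 3: the Jacobian.} The key point is to choose the resolution so that the exceptional divisor is also included in the normal crossing divisor. Each blow-up along a smooth center of codimension $c$ has Jacobian vanishing to order $c-1$ along the new exceptional component. Composing the blow-ups, $\det g'$ therefore vanishes exactly on the exceptional divisor, which is contained in the normal crossing divisor. In the same coordinates this gives
\[
|g'(u)|=|b(u)\,u_1^{h_1}\cdots u_d^{h_d}|, \qquad b(Q)\neq 0.
\]

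\textbf{Main obstacle.} The hard step is \emph{simultaneous} normalization. The coordinates must make both $F\circ g$ and $\det g'$ monomial at once. This needs the strong form of embedded resolution, where exceptional components and the strict transform together form simple normal crossings. Hironaka's theorem supplies exactly this, and I would simply cite it. Properness of $g$ follows because each blow-up along a closed center is proper, and a composition of proper maps is proper.
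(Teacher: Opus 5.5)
Your proposal matches the paper, which also does not prove this result; it imports it by citation as Hironaka's theorem in the form of Watanabe's Theorem~2.3, and your reduction of the simultaneous normal-crossing form for $F\circ g$ and $|g'|$ to the strong, ideal-theoretic embedded resolution theorem is the standard one and is fine. The one step to phrase more carefully is Step~3: in the real-analytic setting, the fact that $\det g'$ vanishes exactly on a normal crossing divisor does not by itself give a unit times a monomial (e.g.\ $u_1(u_1^2+u_2^2)$ vanishes exactly on $\{u_1=0\}$ but has no such form), so you should instead argue from the chain-rule factorization of $\det g'$ into pulled-back powers of exceptional equations, each of which is a unit times some $u_i$ at $Q$ by the simple normal crossing property.
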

  We call a representation of the form \eqref{eq:normalcrossing} a \emph{normal crossing}.
  
  \begin{dfn}[learning coefficient]\label{LambdaDef}
    Let the Kullback--Leibler divergence $K(\theta)$ be a real analytic function defined on an open set $O\subset \mathbb{R}^d$ 
    satisfying $\Theta^*\subset O$.
    For each point $P\in \Theta^*$, 
    after translating coordinates so that $P$ becomes the origin in $\mathbb{R}^d$, we can apply Theorem~\ref{resolution thm}, and we fix one triple $(W,U,g)$ guaranteed by Theorem~\ref{resolution thm}(2).
    Moreover, in a neighborhood of any point $Q\in U_0$, we denote by $h_i^{(Q)}$ and $k_i^{(Q)}$ the nonnegative integers $h_i$ and $k_i$ given by Theorem~\ref{resolution thm}(2).
    \begin{itemize}
      \item[(1)]
        We define the \emph{local learning coefficient} at $P$, denoted by $\lambda_P$, and the \emph{learning coefficient} for the compact set $\Theta^*$, denoted by $\lambda$, by
        \[
        \lambda_P :=\inf_{Q\in U_0}\left\{\min_{i=1,\ldots,d}\frac{h_i^{(Q)}+1}{k_i^{(Q)}}\right\},\qquad
        \lambda   :=\inf_{P\in \Theta^*}\lambda_P,
        \]
        following \cite[Definition 2.7, Theorem 6.6, Definition 6.4]{watanabe2009}, \cite{Aoyagi2025, Lau2025}.
        If $k_i=0$, we define $(h_i+1)/k_i=\infty$.
      \item[(2)]
        In (1), for a point $P\in \Theta^*$ that attains the minimum, we define the \emph{multiplicity} $\mathfrak{m}$ as the maximum number of indices $i$ satisfying $\lambda_P=(h_i^{(Q)}+1)/k_i^{(Q)}$.
        (If there are multiple points $P\in\Theta^*$ that attain the minimum, then the multiplicity is defined as the maximum of these maximal numbers of indices.)
    \end{itemize}
  \end{dfn}

\section{Main Theorem}\label{sec:mainthm}
  In this section, we present a formula for an upper bound on the local learning coefficient for a class of singular models beyond three-layer neural networks,
and then apply it to three-layer neural networks.
Throughout this paper, an ideal means an ideal in the local ring of analytic functions at the point under consideration.

\begin{mainthm}
  Let $\alpha, \beta\in\mathbb{Z}_{\geq 1}$, 
  $\gamma\in\mathbb{Z}_{\geq 1}\cup\{\infty\}$, 
  $r\in\mathbb{Z}_{\geq 0}$, 
  and let \((m_s)_{1\leq s\leq \gamma}\) and 
  \((n_s)_{1\leq s\leq \gamma}\) be sequences of positive integers. 
  Assume that \((m_s)_{1\leq s\leq \gamma}\) is strictly increasing.
  Consider a statistical model $p(x|\theta,a,b)$ with $r+\alpha+\beta$ parameters $\theta=(\theta_1,\ldots,\theta_r)$, $a=(a_1,\ldots,a_\alpha)$, and $b=(b_1,\ldots,b_\beta)$.
  Assume that the origin $P:(\theta,a,b)=(0,0,0)$ is a realization parameter and is an interior point of $\Theta$.
  Suppose that the Taylor expansion of $f(X|\theta=0,a,b)$ can be written in the form
  \[
    f(X|\theta=0,a,b)
    =
    \sum_{s=1}^\gamma \sum_{n=1}^{n_s} g_{s,n}(a,b)Z_{s,n}
    +
    \sum_{s=1}^\infty h_s(a,b)W_s,
  \]
  where $g_{s,n}(a,b)$ and $h_s(a,b)$ are analytic functions, and
  $Z_{s,n}$ and $W_s$ are random variables.
  We assume that the lowest degree of $g_{s,n}(a,b)$ with respect to $b$ is $m_s$.
  Define $L$ and $n_s^\ast$ as follows.
  \footnote{When $\gamma=\infty$, note that $\sum_{s=1}^\gamma n_s\geq \alpha$ and
             $L:=\min\{l\in\mathbb{Z}_{\geq 1}\mid \sum_{s=1}^l n_s\geq \alpha\}$.}
  \begin{align*}
    L&:=
    \begin{cases}
      \gamma & \text{if } \sum_{s=1}^\gamma n_s< \alpha\\
      \min\{l=1,\ldots,\gamma\mid \sum_{s=1}^l n_s\geq \alpha\} & \text{otherwise}
    \end{cases}\\
    n_s^\ast&:=
    \begin{cases}
      n_s & s<L\\
      \min\{n_L, \alpha-\sum_{s=1}^{L-1}n_s\} & s=L
    \end{cases}.
  \end{align*}
  Let $\bar{g}_{s,n}$ denote the lowest-degree term of $g_{s,n}$ (that is, the homogeneous polynomial of degree $m_s$ with respect to $b$), and assume that the following conditions hold.
  \begin{itemize}
    \item [(i)] 
      $\forall s,\forall n,~~g_{s,n}(a=0,\forall b) = 0$.
    \item [(ii)] 
      There exists $b\neq0$ such that the rank of
      $\left.\frac{\partial \bar{g}}{\partial a}\right|_{a=0}$ is $\sum_{s=1}^L n_s^\ast$, where
      $\bar{g}:=(\bar{g}_{1,1},\ldots,\bar{g}_{1,n_1^\ast},\ldots,\bar{g}_{L,1},\ldots,\bar{g}_{L,n_L^\ast})$.
      Here, $\partial\bar{g}/\partial a$ denotes the Jacobian matrix of $\bar{g}$ with respect to $a=(a_1,\ldots,a_\alpha)$, and the rank is evaluated at the above value of $b$.
    \item [(iii)] 
      The random variables $(Z_{1,1},\ldots,Z_{1,n_1},\ldots,Z_{L,1},\ldots,Z_{L,n_L})$ are linearly independent, and,
      when $r>0$, the following random variables are linearly independent:
      \[
        \left.
          \frac{\partial f}{\partial \theta_1}
        \right|_{(\theta,a,b)=0}
        ,\ldots,
        \left.
          \frac{\partial f}{\partial \theta_r}
        \right|_{(\theta,a,b)=0}
        ,Z_{1,1},\ldots,Z_{1,n_1},\ldots,Z_{L,1},\ldots,Z_{L,n_L}.
      \]
    \item [(iv)] For any $s$, the analytic function $h_s(a,b)$ belongs to the ideal 
    $(
    g_{s,n}(a,b)\mid 1\leq s\leq \gamma,1\leq n\leq n_s
    )^2$.
  \end{itemize}
  Then the local learning coefficient $\lambda_P$ has the following upper bound:
  \begin{align}
    \lambda_P & \leq 
    \frac{r}{2} + 
    \begin{cases}
      \frac{\beta}{2m_1} & \text{if } K=0\\
      \frac{\beta+\sum_{s=1}^{K}(m_{K+1}-m_s)n_s^\ast}{2m_{K+1}}      
      \left(
      =\frac{\sum_{s=1}^{K}n_s^\ast}{2}+\frac{\beta-\sum_{s=1}^{K}m_sn_s^\ast}{2m_{K+1}}
      \right)
        & \text{if } 1\leq K\leq L-1\\
      \frac{\sum_{s=1}^L n_s^\ast}{2} & \text{if } K= L
    \end{cases},\label{eq_mainthm01}
  \end{align}
  where
  $K:=\max\left\{k=0,\ldots,L \middle| \sum_{s=1}^k m_sn_s^\ast\leq \beta\right\}$.
  The case $K=0$ means that $\beta<m_1n_1^\ast$ holds.
\end{mainthm}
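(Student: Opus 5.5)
The plan is to use the volume characterization of the local learning coefficient. If $V(t)$ denotes the prior volume of $\{K<t\}$ in a small neighborhood of $P$, then $\lambda_P=\lim_{t\to0}\log V(t)/\log t$, with multiplicity entering only through logarithmic factors. It also follows that $K\le C\widetilde K$ near $P$ implies $\lambda_P(K)\le\lambda_P(\widetilde K)$. So it suffices to dominate $K$ by a simpler function and to exhibit, for small $t$, a set of volume $\gtrsim t^{\mu}(\log 1/t)^{c}$ on which that function is $<t$. Here $\mu$ is the right-hand side of \eqref{eq_mainthm01}. Condition (iii) is not needed for this direction (it is needed for the matching lower bound).

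\textbf{Step 1 (domination of $K$).} Near a realization parameter we have $K\le C\,\mathbb{E}_X[f^2]$; this follows from analyticity of $\theta\mapsto f$ in $L^2(q)$ and $\mathbb{E}_X[f]=K\ge 0$ with second-order expansion of $e^{-f}$. Write
\[
f(X|\theta,a,b)=f(X|0,a,b)+\bigl(f(X|\theta,a,b)-f(X|0,a,b)\bigr).
\]
The second term has $L^2$ norm $\le C|\theta|$. For the first term we use the given expansion. By (iv), each $h_s$ lies in the square of the ideal of the $g_{s,n}$, so $|h_s|\le C\sum_{s,n}g_{s,n}^2$. Assuming the series converges in $L^2$ with summable coefficient bounds, this gives
\[
K\le C\Bigl(|\theta|^2+\sum_{s\le\gamma}\sum_{n\le n_s}g_{s,n}(a,b)^2\Bigr)=:C\widetilde K.
\]

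\textbf{Step 2 (coordinates adapted to (ii)).} Write $b=\rho\omega$ with $\rho=|b|$ and $\omega$ in a small open cone around the vector $b_0$ from (ii). Set $N=\sum_{s\le L}n_s^\ast$. By (i) and homogeneity, $g_{s,n}(a,\rho\omega)=\rho^{m_s}\bigl(\ell_{s,n}(\omega)\cdot a+O(|a|^2)+O(\rho|a|)\bigr)$, where the $N$ linear forms $\ell_{s,n}$ with $s\le L$, $n\le n_s^\ast$ are independent uniformly on the cone. Choose linear coordinates $a=(u,v)$ with $u=(u_{s,n})$, $u_{s,n}=\ell_{s,n}(\omega)\cdot a$ roughly, and $v\in\mathbb{R}^{\alpha-N}$.

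The remaining $g$'s are of two kinds: those with $s>L$, or those with $s=L$, $n>n_L^\ast$. They satisfy $|g|\le C\rho^{m_s}|a|\le C\rho^{m_{K+1}}|a|$ when $K<L$, since the $m_s$ are increasing. When $K=L$ and some extra $g$'s exist, they are not automatically small, but then $N=\alpha$: if $\sum_{s\le\gamma} n_s\ge\alpha$ then $N=\alpha$, and otherwise all $g$'s are among the $N$ chosen ones. If $N<\alpha$ (which forces $L=\gamma$), I will instead use the implicit function theorem in $a$, at fixed $b$ in the cone. This shows that $\{g=0\}$ contains a smooth family of dimension $\alpha-N$, so $v$ can be taken free of size $O(1)$.

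\textbf{Step 3 (volume count).} Consider the set
\[
|\theta|\le c\,t^{1/2},\qquad |u_{s,n}|\le c\min\bigl(1,\;t^{1/2}\rho^{-m_s}\bigr),\qquad |v|\le c,
\]
together with the further requirement $\rho\le c\,t^{1/(2m_{K+1})}$ whenever $v$ or the extra $g$'s are uncontrolled. On this set $\widetilde K<t$, and its volume is at least
\[
t^{r/2}\int_0^{\rho_0}\rho^{\beta-1}\prod_{s\le L}\min\bigl(1,t^{1/2}\rho^{-m_s}\bigr)^{n_s^\ast}\,d\rho .
\]
This integrand is piecewise a power of $\rho$, with breakpoints at $\rho=t^{1/(2m_s)}$. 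On the piece where $s\le k$ is saturated, the exponent of $\rho$ is $\beta-\sum_{s\le k}m_sn_s^\ast$. Hence the integral is dominated by the breakpoint $\rho=t^{1/(2m_{K+1})}$, where $K$ is the last $k$ with $\sum_{s\le k} m_sn_s^\ast\le\beta$; for $K=L$ it is dominated by $\rho\sim 1$.

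Evaluating there gives exponent $\beta/(2m_1)$ for $K=0$, and $\sum_{s\le K}n_s^\ast/2+(\beta-\sum_{s\le K}m_sn_s^\ast)/(2m_{K+1})$ for $1\le K\le L-1$. For $K=L$ it gives $N/2$. Adding $r/2$ yields \eqref{eq_mainthm01}.

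\textbf{Main obstacle.} The delicate point is Step 2: the linearization $g_{s,n}\approx\rho^{m_s}\ell_{s,n}\cdot a$ must be uniform down to $\rho\to0$ while $u$ ranges up to size $1$. The error terms $O(|a|^2)$ and $O(\rho|a|)$ must not destroy the bound $|g_{s,n}|\lesssim t^{1/2}$. My plan is to handle this by an analytic change of variables $u_{s,n}:=\rho^{-m_s}g_{s,n}(a,\rho\omega)$, which is a local diffeomorphism in $a$ uniformly for small $\rho$ by (ii) and the inverse function theorem. The chosen box is then exactly a box in these new coordinates, with Jacobian bounded below.
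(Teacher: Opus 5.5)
Your argument is correct in outline, but it follows a genuinely different route from the paper.

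\textbf{The paper's route.} The paper builds an explicit partial resolution. It performs $m_1$ blow-ups along $\{\theta=b=0\}$. It then makes the change $a\mapsto a'$, $a'_{s,n}=\tilde g_{s,n}(a,b_1,\tau)$, on the chart $b=(b_1,b_1\tau)$ with $\tau$ near a point $T$ where (ii) holds. After that come the towers CT5--CT7 and CT8--CT9. On each chart it reads off a candidate $(h_j+1)/k_j$ from the Jacobian and the monomial factor of $f$, using Lemmas~\ref{lem_Taylor}--\ref{lem_normal_crossing_chart} and condition (iii) to certify normal crossings. Minimizing the candidates gives \eqref{eq_pf_mainthm00}.

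\textbf{Your route.} You share only the Step~2 ingredient: your $u_{s,n}=\rho^{-m_s}g_{s,n}(a,\rho\omega)$ is the paper's $\tilde g_{s,n}$ in polar rather than blow-up coordinates. You replace all the blow-ups by the volume characterization together with the domination $K\lesssim|\theta|^2+\sum g_{s,n}^2$. The whole tower then collapses into one integral in $\rho$. Its breakpoints $\rho=t^{1/(2m_s)}$ are exactly the shelves of Remark~\ref{rem_mainthm01}, so the right-hand side of \eqref{eq_mainthm01} appears directly as the dominant breakpoint.

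\textbf{What each buys.} Your route is shorter and makes the budget interpretation literal. It also shows that (iii) is not needed for the inequality: your domination of $K$ never uses it, and linear independence only matters for a matching lower bound. The paper's route produces actual normal crossings. These give the multiplicity and, when (ii) holds for every $b\neq0$, equality (Remark~\ref{rem_mainthm}); a volume lower bound cannot give either. The paper also works directly with the resolution-based definition of $\lambda_P$, whereas you need the standard asymptotics $V(t)\asymp t^{\lambda_P}(\log 1/t)^{\mathfrak{m}-1}$.

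Two points need tightening.

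\textbf{Domination of $K$.} The bound $K\le C\,\mathbb{E}_X[f^2]$ does not follow from $L^2$-analyticity and a second-order expansion alone, because $t+e^{-t}-1$ grows exponentially as $t\to-\infty$. You need $f$ bounded below near $P$, or a comparable regularity assumption. For the Gaussian-noise regression models of Section~\ref{exsec01} it is immediate, since $K$ is half the mean squared difference of the regression functions.

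\textbf{The remaining $g_{s,n}$.} The $g_{s,n}$ with $s>L$, or with $s=L$ and $n>n_L^\ast$, are automatically small on your box, contrary to what you write for $K=L$. When $N=\alpha$ you have $|a|\lesssim\max_{s',n'}|u_{s',n'}|$. Hence $|g_{s,n}|\le C\rho^{m_s}|a|\lesssim\max_{s'}\rho^{m_s-m_{s'}}t^{1/2}\le t^{1/2}$, because $m_s\ge m_L\ge m_{s'}$ and $\rho\le 1$. So no cutoff in $\rho$ is needed. In the case $K=L$ (e.g.\ $\gamma=\infty$), your proposed cutoff $\rho\le c\,t^{1/(2m_{L+1})}$ would actually hurt. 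It would multiply the volume by $t^{(\beta-\sum_{s\le L}m_sn_s^\ast)/(2m_{L+1})}$ and destroy the exponent $\sum_{s\le L}n_s^\ast/2$.
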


  By the Main Theorem, the six quantities $(r,\alpha,\beta,\gamma,(m_s),(n_s))$ have a substantial impact on the local learning coefficient $\lambda_P$.
  The quantity $r$ is the rank of the Fisher information matrix at the point $P$, and $\alpha+\beta$ is the number of parameters for which the Fisher information matrix degenerates.
  In the Taylor expansion of $f$, the quantities $\gamma$, $m_s$, and $n_s$ represent, respectively,
  the number of distinct groups of terms with different lowest degrees in $b$,
  the lowest degree in $b$ of the $s$-th group,
  and the number of linearly independent random variables associated with the $s$-th group.

  Table~\ref{tab:mainthm_nn_correspondence}
  summarizes the concrete values of these quantities in the
  three-layer neural-network cases analyzed in Section~\ref{exsec01}.
  The network notation used in the table is formally introduced there:
  $N$, $H$, and $M$ denote the numbers of units in the input layer,
  the hidden layer, and the output layer, respectively;
  $H^*$ denotes the number of units in the hidden layer of the true distribution;
  and, in the linear case, $R:=\operatorname{rank}(A^*B^*)$.
  In the general polynomial case, $S$ denotes the number of nonzero polynomial degrees.
  The realization parameters $P_1$ and $P_2$ are defined in Section~\ref{exsec01}.

\begin{center}
\captionof{table}{Concrete values of the quantities in the Main Theorem
for the three-layer neural-network cases considered in Section~\ref{exsec01}}
\label{tab:mainthm_nn_correspondence}
\small
\setlength{\tabcolsep}{2.5pt}
\renewcommand{\arraystretch}{1.18}
\begin{adjustbox}{max width=\linewidth,center}
  \begin{tabular}{@{}lcccccc@{}}
    \toprule
    {\Large Case}
      & {\Large $r$}
      & {\Large $\alpha$}
      & {\Large $\beta$}
      & {\Large $\gamma$}
      & {\Large $m_s$}
      & {\Large $n_s$} \\
    \midrule

    \begin{tabular}[c]{@{}l@{}}
      non-polynomial,\\
      analytic, $P_1$
    \end{tabular}
      & $(M+N)H^*$
      & $M(H-H^*)$
      & $N(H-H^*)$
      & $\infty$
      & \begin{tabular}[c]{@{}c@{}}
          nonzero derivative\\
          orders of $\sigma$
        \end{tabular}
      & $M\binom{m_s+N-1}{m_s}$ \\

    \begin{tabular}[c]{@{}l@{}}
      non-polynomial,\\
      analytic, $N=1$, $P_2$
    \end{tabular}
      & $(M+1)H^*$
      & $M(H-H^*)$
      & $H-H^*$
      & $\infty$
      & $s$
      & $M-\mathbf{1}_{s=1}$ \\

    $\sigma(x)=x$
      & $R(M+N-R)$
      & $(M-R)(H-R)$
      & $(N-R)(H-R)$
      & $1$
      & $1$
      & $(M-R)(N-R)$ \\

    \begin{tabular}[c]{@{}l@{}}
      general polynomial,\\
      $H^*=0$
    \end{tabular}
      & $0$
      & $HM$
      & $HN$
      & $S$
      & \begin{tabular}[c]{@{}c@{}}
          nonzero polynomial\\
          degrees
        \end{tabular}
      & $M\binom{m_s+N-1}{m_s}$ \\

    \bottomrule
  \end{tabular}
\end{adjustbox}
\end{center}

  For example, in the first row,
  $r=(M+N)H^*$ is the number of parameters attached to the $H^*$ true hidden units,
  while $\alpha=M(H-H^*)$ and $\beta=N(H-H^*)$
  are the numbers of output-side and input-side weights attached to the
  $H-H^*$ redundant hidden units, respectively.
  Moreover,
  $n_s=M\binom{m_s+N-1}{m_s}$
  counts pairs consisting of an output coordinate and a multi-index
  $(h_1,\ldots,h_N)\in\mathbb{Z}_{\geq0}^N$
  satisfying $h_1+\cdots+h_N=m_s$. 
  In the second row, when \(M=1\), the term with \(s=1\) is absent, and the same formula is
  understood after reindexing the sequence as \(m_s=s+1\) and \(n_s=1\).

\begin{rem}\label{rem_mainthm01}
  The result \eqref{eq_mainthm01} of the Main Theorem for $2\lambda_P$ can be interpreted as a counting rule under budget and supply constraints.
  Suppose that a store displays items on $\gamma$ shelves, and that shelf $s$ contains $n_s$ items with unit price $m_s$, where $(m_s)$ is a strictly increasing sequence $(s=1,2,\ldots,\gamma)$.
  Thus, the total inventory is $\sum_{s=1}^\gamma n_s$ items.
  Suppose that the demand is $\alpha$ items, and that no more than this amount is purchased.
  Given budget $\beta$, consider purchasing as many items as possible starting from shelf $1$ in order.

  In this setting, the quantities $L$, $n_s^\ast$, and $K$ in the Main Theorem represent, respectively, the index of the shelf at which we finish collecting items when the budget $\beta$ is ignored, the number of items to be purchased from each shelf when the budget $\beta$ is ignored, and the index of the last shelf from which all items can be purchased when the budget $\beta$ is taken into account.

  Under these definitions, the second term in \eqref{eq_mainthm01} for $2\lambda_P$ represents the \emph{maximum total number of items purchased (allowing fractional quantities) under the constraints of the number of shelves $\gamma$, demand $\alpha$, budget $\beta$, prices $m_s$, and inventories $n_s$}.
  More specifically, the first case of \eqref{eq_mainthm01} corresponds to the situation in which the budget is exhausted while purchasing items on the first shelf.
  The second case corresponds to the situation in which all items up to shelf $K$ are purchased, and then items on shelf $K+1$ are purchased until the budget is exhausted.
  The third case corresponds to the situation in which all desired items can be purchased within the budget.
\\

  For two tuples
  \[
    (r^{(1)},\alpha^{(1)},\beta^{(1)},\gamma^{(1)},m_s^{(1)}, n_s^{(1)}),
    (r^{(2)},\alpha^{(2)},\beta^{(2)},\gamma^{(2)},m_s^{(2)}, n_s^{(2)}),
  \]
  let $\lambda^{(1)}$ and $\lambda^{(2)}$ denote the right-hand sides of \eqref{eq_mainthm01} for the respective tuples.
  Then the following relations hold.
  Since each of them can be verified by straightforward calculation, we omit the proofs.
  \begin{itemize}
    \item [(1)]
      If $(r,\alpha,\beta,\gamma,n_s)$ are the same for both tuples and $m_s^{(1)}\leq m_s^{(2)}$ holds for all $s$, then $\lambda^{(1)}\geq\lambda^{(2)}$ holds.
      This indicates that, as the unit prices increase, the number of items that can be purchased decreases accordingly.

    \item [(2)]
      If $(r,\alpha,\beta,\gamma,m_s)$ are the same for both tuples and $n_s^{(1)}\leq n_s^{(2)}$ holds for all $s$, then $\lambda^{(1)}\leq\lambda^{(2)}$ holds.
      This indicates that, as the inventories on all shelves increase, the number of items that can be purchased cheaply also increases.

    \item [(3)]
      For $(m_s)$ and $(n_s)$, inserting a shelf with $n_s=0$ at any position leaves $\lambda$ unchanged.
      This is because adding a shelf with no inventory does not affect the number of items obtained.
      Note that, in this case, the indices $L$ and $K$, which represent shelf numbers, change according to the number of inserted shelves.

  \end{itemize}
\end{rem}

\begin{rem}\label{rem_mainthm}
  As shown in the proof, after the first coordinate transformation, we consider
  the coordinate chart on which a normal crossing has not yet been obtained.
  On this chart, near an arbitrary point $b\neq0$ satisfying the rank condition in Condition~(ii), 
  the inverse function theorem gives the coordinate transformation \(a\mapsto a^\prime\).
  The subsequent coordinate transformations then give a normal crossing.

  Hence the Main Theorem gives an upper bound for \(\lambda_P\) in general.
  If Condition~(ii) holds for every \(b\neq0\), then no remaining coordinate
  neighborhood is left untreated, and \eqref{eq_mainthm01} becomes an equality.
  In this case, the multiplicity of \(\lambda_P\) is \(2\) if there exists
  \(k\ (1\leq k\leq L)\) such that
  $
    \beta=\sum_{s=1}^k m_s n_s^\ast,
  $
  and is \(1\) otherwise.

  If there exists \(b\neq0\) not satisfying Condition~(ii), then the present
  argument gives only the upper bound.  To conclude equality in that case, one
  would also have to analyze the local coordinate neighborhoods arising from such
  rank-degenerate points and show that they do not give smaller candidate values
  than the right-hand side of \eqref{eq_mainthm01}.
  
\end{rem}

\section{Upper bounds for local learning coefficients of three-layer neural networks}\label{exsec01}
In this section, we confirm that the Main Theorem is applicable to three-layer neural networks with $N(\geq 1)$ units in the input layer, $H(\geq 1)$ units in the hidden layer, and $M(\geq 1)$ units in the output layer.
For detailed arguments, see \ref{sec_app_mainthm}.

Let \(X:=(X_1,\ldots,X_N)^\top\) be an \(\mathbb{R}^N\)-valued random variable.
We assume that the distribution \(q_X\) of \(X\) has compact support.
Let \(\mathcal{N}\) be an \(M\)-dimensional standard normal random variable
independent of \(X\).
For an \(\mathbb{R}^M\)-valued random variable \(Y\), we consider a three-layer neural network with parameters
$A:=(a_{k,i})\in\mathbb{R}^{M\times H}$ and $B:=(b_{i,j})\in\mathbb{R}^{H\times N}$:
\[
  Y=A\sigma(BX)+\mathcal{N}.
\]
We assume that the activation function $\sigma$ is analytic and satisfies $\sigma(0)=0$.
In what follows, we denote the $i$-th column of $A$ by $a_{\cdot,i}\in\mathbb{R}^M$ and write
$b_i:=(b_{i,1},\ldots,b_{i,N})^\top\in\mathbb{R}^N$ for the transpose of the $i$-th row of $B$.
We write $\sigma(BX):=(\sigma(b_1^\top X),\ldots,\sigma(b_H^\top X))^\top$.

We denote by $H^*(0\leq H^*<H)$ the number of hidden-layer units in the true distribution, 
and assume that the true distribution can be represented by $A^*\in\mathbb{R}^{M\times H^*}$ and $B^*\in\mathbb{R}^{H^*\times N}$ as
\[
  Y=
  \begin{cases}
    A^*\sigma(B^*X)+\mathcal{N} & \text{if } H^*\geq 1,\\
    \mathcal{N} & \text{if } H^*=0.
  \end{cases}
\]
We may assume without loss of generality that every column vector of $A^*$ is nonzero.
Indeed, if all weights connecting a hidden-layer unit to the output layer are zero,
then removing that unit does not change the network output.
Thus this condition amounts to taking \(H^*\) to be minimal among
representations of the true distribution of this form.

We consider applying the Main Theorem to the following realization parameter
$P_1\in\Theta^*$:
\begin{align*}
  P_1:&
  \begin{cases}
    a_{\cdot,i}=a_{\cdot,i}^*~(1\le i\le H^*),\quad
    a_{\cdot,i}=0~(H^*+1\le i\le H),\\
    b_i=b_i^*~(1\le i\le H^*),\quad
    b_i=0~(H^*+1\le i\le H).
  \end{cases}
\end{align*}

\subsection{The case where $\sigma$ is a non-polynomial analytic function}\label{sec_ex_nonpolynomial}
Throughout this subsection, we assume that $\sigma$ is not a polynomial.
Let $(m_s)$ be the increasing sequence of integers $s$ such that $\sigma^{(s)}(0)\neq 0$.
Assume that, for every positive integer \(S\), the following family of random variables is linearly independent:
\begin{equation}\label{eq_sing01_01}
  \begin{array}{ll}
    \sigma(b_i^{*\top}X)
      & (1\leq i \leq H^*),\\
    \sigma^{\prime}(b_i^{*\top}X)X_j
      & (1\leq i \leq H^*,~ 1\leq j \leq N),\\
    \displaystyle\prod_{j=1}^N X_j^{h_j}
      & (1\leq s\leq S,~ h_1+\cdots+h_N=m_s,~ h_j\geq 0).
  \end{array}
\end{equation}
These correspond respectively to
$\left.\frac{\partial f}{\partial a_{k,i}}\right|_{\theta=\theta^*}$,
$\left.\frac{\partial f}{\partial b_{i,j}}\right|_{\theta=\theta^*}$, and
$\left.\frac{\partial^{m_s+1} f}{\partial a_{k,i}\partial b_{i,1}^{h_1}\cdots\partial b_{i,N}^{h_N}}\right|_{\theta=\theta^*}$
(see \ref{sec_app_mainthm}).
In this case, the Main Theorem applies with
$
(r,\alpha,\beta,\gamma,m_s,n_s)=((M+N)H^*, M(H-H^*), N(H-H^*),\infty, m_s, M\textstyle\binom{m_s+N-1}{m_s}),
$
and by using
\begin{align*}
  L:=&\min\left\{
    l\in\mathbb{Z}_{\geq 1}\mid
    \textstyle\sum_{s=1}^l\textstyle\binom{m_s+N-1}{m_s}\geq H-H^*
  \right\}\\
  K:=&\max\left\{k=0,\ldots,L\mid
    M\textstyle\sum_{s=1}^k m_s\textstyle\binom{m_s+N-1}{m_s}\leq N(H-H^*)
  \right\},
\end{align*}
we obtain the following upper bound:
\begin{equation}\label{eq_sing01_02}
  \lambda_{P_1}\leq
  \textstyle\frac{(M+N)H^*}{2} +
  \begin{cases}
    \frac{N(H-H^*)}{2m_1} & \text{if } K=0,\\
    \frac{N(H-H^*)+M\sum_{s=1}^{K}\binom{m_s+N-1}{m_s}\times(m_{K+1}-m_s)}{2m_{K+1}} & \text{if } 1\leq K \leq L-1,\\
    \frac{M(H-H^*)}{2} & \text{if } K= L.
  \end{cases}
\end{equation}

\begin{rem}
  When $N=1$ and $m_s=m+Q(s-1)$ $(s=1,2,\ldots)$, the local learning coefficient has already been obtained in \cite{Aoyagi2009}. 
  The value on the right-hand side of \eqref{eq_sing01_02} agrees with this exact value.

  When $N\geq2$ and $m_s=1+Q(s-1)$ $(s=1,2,\ldots)$,
  \eqref{eq_sing01_02} is consistent with the upper bounds obtained in
  \cite{Aoyagi2013}.  More precisely, \cite[bound~2]{Aoyagi2013} gives the
  second case of \eqref{eq_sing01_02}, whereas \cite[bound~3]{Aoyagi2013}
  gives the third case.

  Sufficient conditions for the linear independence condition
  \eqref{eq_sing01_01} are given in
  Proposition~\ref{prop_lin_independent01}.
  The conditions in Proposition~\ref{prop_lin_independent01} give a broad and
  natural sufficient condition in the continuous-input setting considered here.
  Discrete or degenerate input distributions and symmetric weight configurations
  require separate consideration.
\end{rem}

\subsubsection{Upper bound for the learning coefficient when $N=1$}
Here we compare our upper bound only in the case \(N=1\), for which the learning coefficient
has been obtained in previous work \cite{Aoyagi2009} when \(\sigma=\tanh\).
Assume that, for every positive integer \(S\), the following family of random variables is linearly independent:
\begin{equation}\label{eq_sing02_01}
  \begin{array}{ll}
    \sigma(b_i^{*}X)
      & (1\leq i \leq H^*),\\
    \sigma^{\prime}(b_i^{*}X)X
      & (1\leq i \leq H^*),\\
    \displaystyle\sigma^{(s)}(b_{H^*}^{*}X)X^s
      & (2\leq s\leq S).
  \end{array}
\end{equation}
Consider the realization parameter, which is defined only when the true number
of hidden-layer units satisfies $H^*\geq1$,
\begin{align*}
  P_2:&
  \begin{cases}
    a_{\cdot,i}=a_{\cdot,i}^*~(1\le i\le H^*),\quad
    a_{\cdot,i}=0~(H^*+1\le i\le H),\\
    b_i=b_i^*~(1\le i\le H^*),\quad
    b_i=b_{H^*}^*~(H^*+1\le i\le H).
  \end{cases}
\end{align*}
Then the Main Theorem applies with
\[
  (r,\alpha,\beta,\gamma,m_s,n_s)
  =
  \left((M+1)H^*,\, M(H-H^*),\, H-H^*,\,\infty,\, s,\,M-\mathbf{1}_{s=1}\right).
\]
When $M=1$, the term with $s=1$ does not appear. Strictly speaking, therefore,
we apply the Main Theorem after reindexing the sequence as $m_s=s+1$ and $n_s=1$.
Using
\begin{align*}
  L:=&
    \min\left\{
    l\in\mathbb{Z}_{\geq 1}\mid 
    Ml\geq M(H-H^*)+1
    \right\}\\
  K:=&
    \max\left\{k=0,\ldots,L\mid
    M\textstyle\binom{k+1}{2}\leq H-H^*+1
    \right\},
\end{align*}
we obtain
\begin{equation}\label{eq_sing02_02}
  \lambda_{P_2}\leq 
  \frac{(M+1)H^*}{2} + 
  \begin{cases}
    \frac{H-H^*}{2} & \text{if } K=0,\\
    \frac{H-H^*-K + M\binom{K+1}{2}}{2(K+1)} & \text{if } 1\leq K \leq L-1,\\
    \frac{M(H-H^*)}{2} & \text{if } K= L.
  \end{cases}
\end{equation}
Thus, when $H^*\geq1$, we obtain the following upper bound for the learning coefficient
$\lambda$ of the three-layer neural network:
\begin{equation}\label{eq_comparison01}
  \lambda\leq \min\{\lambda_{P_1}, \lambda_{P_2}\}
  \leq \min\{\lambda^{(1)}, \lambda^{(2)}\}.
\end{equation}
Here, $\lambda^{(1)}$ and $\lambda^{(2)}$ denote the right-hand sides of
\eqref{eq_sing01_02} and \eqref{eq_sing02_02}, respectively.

In what follows, we consider the three types of activation functions listed in
Table~\ref{table_lambdaNN01}.
\begin{table}[htbp]
  \centering
  \caption{Non-polynomial activation functions considered}
  \label{table_lambdaNN01}
  \begin{tabular}{cc}
    Activation function $\sigma(x)$ & $\{s\mid \sigma^{(s)}(0)\neq 0\}$ \\ \hline
    $e^x-1,~xe^x,~ x\tanh(\log(1+e^x))$ & $(1,2,3,4,5,\ldots)$ \\\hline
    $x/(1+e^{-x}),~ x\Phi(x)$ & $(1,2,4,6,8,\ldots)$ \\\hline
    $\tanh(x), \sin(x), \arctan(x)$ & $(1,3,5,7,9,\ldots)$
  \end{tabular}
  \caption*{\footnotesize $\Phi(x)$: the distribution function of the one-dimensional standard normal distribution}
\end{table}

When $\sigma(x)=\exp(x)-1$, Remark~\ref{rem_mainthm01}(2) gives
$\lambda^{(2)}\leq\lambda^{(1)}$.
For the other activation functions, a straightforward calculation shows that
the necessary and sufficient condition for $\lambda^{(2)}\leq\lambda^{(1)}$ is
\[
\begin{aligned}
  H-H^*&\leq \max\{17,3M+8\}
    &&\text{for the swish type,}\\
  H-H^*&\leq
  \left\{
  \begin{array}{@{}ll@{}}
    9     & (M=1),\\[-0.2ex]
    4     & (M=2),\\[-0.2ex]
    M+3   & (M\geq3)
  \end{array}
  \right.
    &&\text{for the $\tanh$ type;}
\end{aligned}
\]
see Figure~\ref{fig_learning_coefficientsNN01}.
When $\sigma=\tanh$, the right-hand side of \eqref{eq_comparison01} agrees with
the learning coefficient obtained in \cite{Aoyagi2009}.

\begin{figure}[htbp]
  \centering
  \includegraphics[width=0.68\textwidth]{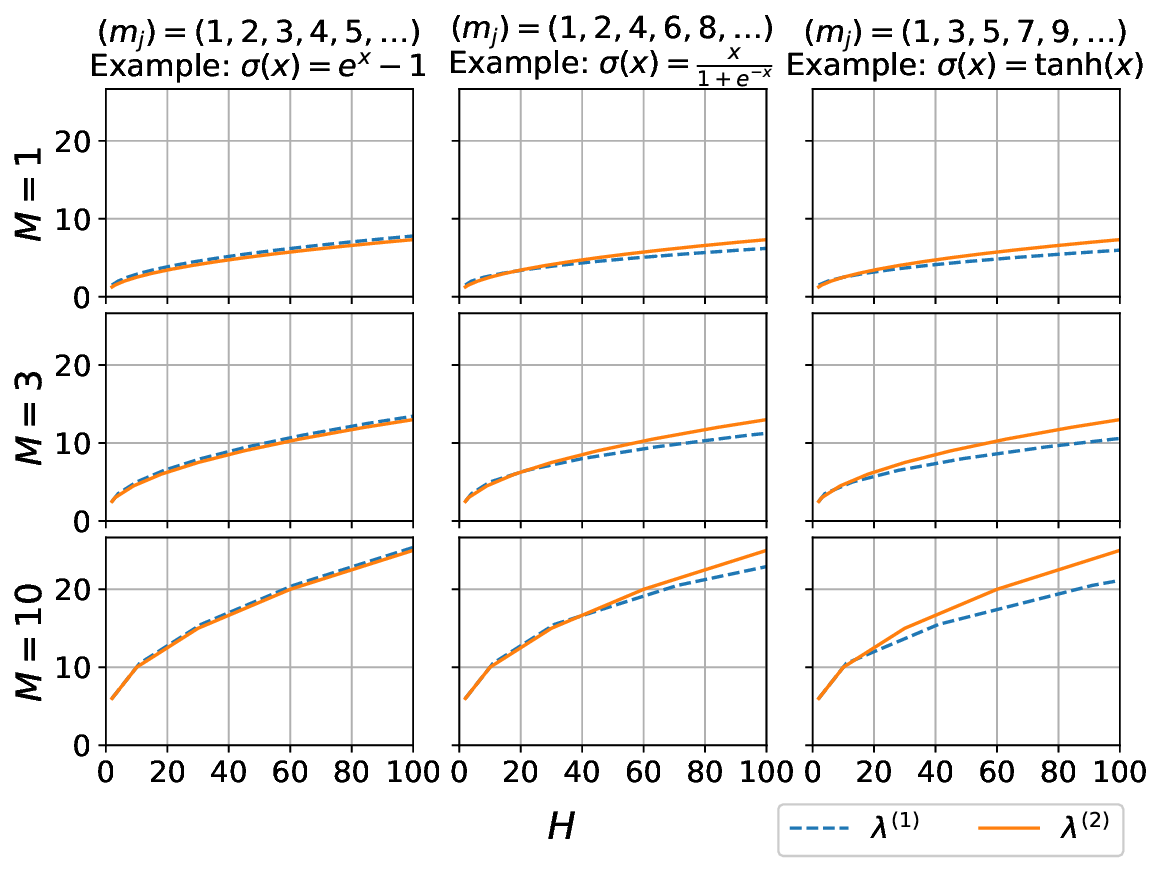}
  \caption{Upper bounds of the local learning coefficients at $P_1$ and $P_2$ for $(H^*,N)=(2,1)$}
  \label{fig_learning_coefficientsNN01}
\end{figure}

\subsection{The case where $\sigma$ is a polynomial function}\label{sec_ex_polynomial}

\subsubsection{The case $\sigma(x)=x$ (the reduced-rank regression model)}

  Let $A^*$ and $B^*$ be the true parameters, and let
  $R:=\textrm{rank}(A^*B^*)$.  In applying the Main Theorem below,
  we consider the non-boundary case $R<\min\{M,N,H\}$.
  Assume that the random variables
  $
    X_j ~ (1\leq j\leq N)
  $
  are linearly independent.
  After a suitable coordinate transformation around the realization parameter $P_1$, 
  the assumptions of the Main Theorem are satisfied with
  $
    (r,\alpha,\beta,\gamma,m_s,n_s)
    =(R(M+N-R),\ (M-R)(H-R),\ (N-R)(H-R),\ 1,\ 1,\ (M-R)(N-R)),
  $
  and we obtain the following upper bound:
  \begin{align*}
    \lambda_{P_1}
    \leq
    \begin{cases}
      \frac{HN-HR+MR}{2} & \text{if } N<H<M \text{ or } H\leq N<M,\\
      \frac{HM-HR+NR}{2} & \text{if } N\geq M \text{ and } N\geq H,\\
      \frac{MN}{2} & \text{if } H\geq M \text{ and } N< H.
    \end{cases}
  \end{align*}
  The learning coefficient of the reduced-rank regression model was obtained in \cite{Aoyagi2005}, 
  which classifies the specific value into four cases.
  The value of the upper bound obtained here agrees with the results in \cite[Cases~2--4]{Aoyagi2005}.
  On the other hand, a result corresponding to \cite[case~1]{Aoyagi2005} is not obtained from the Main Theorem.
  This gives an example in which the inequality in the Main Theorem is strict.
  For example, when $N=M=H=d\geq2$ and $R=0$, the upper bound obtained in this paper is $d^2/2$, whereas the learning coefficient is $3d^2/8$ if $d$ is even and $(3d^2+1)/8$ if $d$ is odd.

      \begin{rem}
        $r+\alpha+\beta$ does not coincide with the number of parameters. This suggests that there are parameters that do not contribute to the result of the Main Theorem, that is, redundant parameters.
      \end{rem}

\subsubsection{The case of a general polynomial $\sigma$}
  Here we consider only the case $H^*=0$. 
  Suppose that
  $$
    \sigma(x):=\textstyle\sum_{s=1}^S c_sx^{m_s},
    ~ c_s\neq 0,~ m_1<\cdots<m_S.
  $$
  Assume that the random variables
  $
    \textstyle\prod_{j=1}^N X_j^{h_j}
    ~
    (1\leq s\leq S,\ h_1+\cdots+h_N=m_s,\ h_j\geq 0)
  $
  are linearly independent.
  Then the assumptions of the Main Theorem are satisfied at $P_1$ with
  $
    (r,\alpha,\beta,\gamma,m_s,n_s)
    =(0,HM,HN,S,m_s,M\binom{m_s+N-1}{m_s}),
  $
  where
  \begin{align*}
    L=&
    \begin{cases}
      S
      & \text{if } \sum_{s=1}^S\binom{m_s+N-1}{m_s}<H,\\
      \min\left\{\,l=1,\ldots,S \,\middle|\, \sum_{s=1}^l \binom{m_s+N-1}{m_s}\geq H\,\right\}
      & \text{otherwise},
    \end{cases}\\
    n_s^\ast=&
    \begin{cases}
      M\binom{m_s+N-1}{m_s} & \text{if } s<L,\\
      M\min\left\{
        \binom{m_L+N-1}{m_L},\,
        H-\sum_{s=1}^{L-1}\binom{m_s+N-1}{m_s}
      \right\}
      & \text{if } s=L,
    \end{cases}\\
    K:=&
    \max\left\{
      k=0,\ldots,L
      \,\middle|\,
      \textstyle\sum_{s=1}^k m_sn_s^\ast\leq HN
    \right\},
  \end{align*}
  and we obtain the following upper bound:
  \begin{equation*}
    \lambda_{P_1}
    \leq
    \begin{cases}
      \frac{HN}{2m_1}
      & \text{if } K=0,\\
      \frac{HN+\sum_{s=1}^{K}(m_{K+1}-m_s)n_s^\ast}{2m_{K+1}}
      & \text{if } 1\leq K\leq L-1,\\
      \frac{\sum_{s=1}^L n_s^\ast}{2}
      & \text{if } K=L.
    \end{cases}
  \end{equation*}

  \begin{rem}
    For $H^*>0$, the Main Theorem can also be applied in some cases, but the resulting analysis
    takes a form different from that in the non-polynomial analytic case and requires a separate treatment.
    We therefore leave a systematic analysis of this case to future work.
  \end{rem}

\section{Proof of the Main Theorem}\label{sec_pf_mainthm}
  We present an outline of the proof here; see \ref{sec_DetailPf_mainthm} for details.
  We prove the claim by carrying out the four steps shown in Figure~\ref{fig3} and obtaining a normal crossing form.
  
  Step~1 consists of a total of $m_1$ blow-ups.
  In Step~2, we perform the coordinate transformation $a\mapsto a^\prime$ in a local chart selected using Condition~(ii).
  Since this local chart is only part of the full coordinate neighborhood,
the resulting value is not $\lambda_P$ itself, but rather an upper bound for it.
  Step~3 is the stage in which we perform blow-ups using $a^\prime$.
  In each Step~3-$k$ $(1\leq k \leq L-1)$, we perform $m_{k+1}-m_k$ blow-ups.
  Step~4 consists of one blow-up with respect to $(\theta,a)$, after which a normal crossing form is obtained.
  
  The upper bounds for $\lambda_P$ obtained at each step are as follows.
  In Step~1, we obtain $r/2+\beta/(2m_1)$, and in Step~3-$k$, we obtain
  \[
    \frac{r}{2}
    +
    \min\left\{
      \frac{\beta+\sum_{s=1}^{k-1}(m_k-m_s)n_s^\ast}{2m_k},
      \frac{\beta+\sum_{s=1}^k(m_{k+1}-m_s)n_s^\ast}{2m_{k+1}}
    \right\},
    \quad 1\leq k\leq L-1.
  \]
  In Step~4, we obtain
  \[
    \frac{r}{2}
    +
    \min\left\{
      \frac{\beta+\sum_{s=1}^{L-1}(m_L-m_s)n_s^\ast}{2m_L},
      \frac{\sum_{s=1}^L n_s^\ast}{2}
    \right\}.
  \]
  Therefore,
  \begin{align}
    \lambda_P & \leq 
    \frac{r}{2} + 
    \min
    \left\{
      \frac{\beta}{2m_1},
      \frac{\beta+\sum_{s=1}^{k}(m_{k+1}-m_s)n_s^\ast}{2m_{k+1}}
      \quad (1\leq k\leq L-1),
      \frac{\sum_{s=1}^L n_s^\ast}{2}
    \right\}.
    \label{eq_pf_mainthm00}
  \end{align}
  To determine the minimum in \eqref{eq_pf_mainthm00}, note that
  \[
    \frac{\beta+\sum_{s=1}^{k}(m_{k+1}-m_s)n_s^\ast}{2m_{k+1}}
    =
    \frac{\sum_{s=1}^{k}n_s^\ast}{2}
    +
    \frac{\beta-\sum_{s=1}^{k}m_sn_s^\ast}{2m_{k+1}}.
  \]
  Hence, for $k=0,\ldots,L-2$,
  \begin{align*}
    &
    \frac{\sum_{s=1}^{k}n_s^\ast}{2}
    +
    \frac{\beta-\sum_{s=1}^{k}m_sn_s^\ast}{2m_{k+1}}
    <
    \frac{\sum_{s=1}^{k+1}n_s^\ast}{2}
    +
    \frac{\beta-\sum_{s=1}^{k+1}m_sn_s^\ast}{2m_{k+2}}
    \\
    &\qquad\Longleftrightarrow\qquad
    \beta<\sum_{s=1}^{k+1}m_sn_s^\ast.
  \end{align*}
  Also,
  \[
    \frac{\sum_{s=1}^{L-1}n_s^\ast}{2}
    +
    \frac{\beta-\sum_{s=1}^{L-1}m_sn_s^\ast}{2m_L}
    <
    \frac{\sum_{s=1}^{L}n_s^\ast}{2}
    \Longleftrightarrow
    \beta<\sum_{s=1}^{L}m_sn_s^\ast.
  \]
  Therefore \eqref{eq_pf_mainthm00} gives \eqref{eq_mainthm01}.
  The proof is complete.

\begin{figure}
  \centering
  \begin{tikzpicture}[
    x=0.4cm, y=1cm,
    >=Stealth,
    nonterm/.style    ={circle,draw=black,line width=0.85pt,minimum size=3.9mm,inner sep=0pt},
    nontermBig/.style ={circle,draw=black,line width=1.05pt,minimum size=6.8mm,inner sep=0pt},
    term/.style       ={circle,draw=blue, line width=0.85pt,minimum size=3.9mm,inner sep=0pt},
    termBig/.style    ={circle,draw=blue, line width=1.05pt,minimum size=6.8mm,inner sep=0pt},
    halo/.style       ={circle,draw=red,  dashed,line width=1.0pt,minimum size=7.6mm,inner sep=0pt},
    ct/.style         ={font=\scriptsize} 
  ]

  \node[nontermBig] (S0)  at (0,0)   {};
  \node[termBig]    (T1)  at (0,-2)  {}; 

  \node[nontermBig] (S2)  at (3,0)   {}; 
  \node[termBig]    (S2d) at (3,-2)  {}; 

  \node[nontermBig] (S4)  at (9,0)   {}; 

  \node[nonterm]    (A0)  at (11.5,0) {};
  \node[halo]        at (11.5,0) {};

  \node[term]       (A0d) at (11.5,-2) {};
  \node[nonterm]    (B0)  at (14,0)    {};
  \node[term]       (B0d) at (14,-2)   {};
  \node[nonterm]    (C0)  at (17,0)    {}; 

  \node[nonterm]    (D0)  at (20,0)    {}; 

  \node[term]       (D0d) at (20,-2)   {};
  \node[nonterm]    (E0)  at (22.5,0)  {};
  \node[term]       (E0d) at (22.5,-2) {};
  \node[nonterm]    (F0)  at (25.5,0)  {}; 

  \node[term]       (F0d) at (25.5,-2) {};
  \node[term]       (G0)  at (28,0)    {}; 

  \def\ctSeg{2.5} 
  \path (S2) ++(\ctSeg,0)  coordinate (S2midA);
  \path (S4) ++(-\ctSeg,0) coordinate (S2midB);

  \path (B0) ++(1.2,0) coordinate (B0dot);
  \path (E0) ++(1.2,0) coordinate (E0dot);

  \draw[->,line width=1.0pt] (S0) -- node[left,ct]  {CT1} (T1);
  \draw[->,line width=1.0pt] (S0) -- node[above,ct] {CT2} (S2);

  \draw[->,line width=1.0pt] (S2) -- node[left,ct]  {CT3} (S2d);

  \draw[->,line width=1.0pt] (S2) -- node[above,ct] {CT4} (S2midA);
  \draw[dotted,line width=1.0pt] (S2midA) -- (S2midB);
  \draw[->,line width=1.0pt] (S2midB) -- node[above,ct] {CT4} (S4);

  \draw[->,line width=1.0pt] (S4) -- (A0);

  \draw[->,line width=1.0pt] (A0) -- (A0d); 
  \draw[->,line width=1.0pt] (A0) -- node[above,ct] {CT7(1)}      (B0);
  \draw[->,line width=1.0pt] (B0) -- node[left,ct]  {CT5--CT6(1)} (B0d);
  \draw[dotted,line width=1.0pt] (B0) -- (B0dot);
  \draw[->,line width=1.0pt] (B0dot) -- node[above,ct] {CT7(1)}   (C0);

  \draw[dotted,line width=1.0pt] (C0) -- (D0);

  \draw[->,line width=1.0pt] (D0) -- (D0d); 
  \draw[->,line width=1.0pt] (D0) -- node[above,ct] {CT7(L-1)}      (E0);
  \draw[->,line width=1.0pt] (E0) -- node[left,ct]  {CT5--CT6(L-1)} (E0d);
  \draw[dotted,line width=1.0pt] (E0) -- (E0dot);
  \draw[->,line width=1.0pt] (E0dot) -- node[above,ct] {CT7(L-1)}   (F0);

  \draw[->,line width=1.0pt] (F0) -- node[left,ct]  {CT8} (F0d);
  \draw[->,line width=1.0pt] (F0) -- node[above,ct] {CT9} (G0);

  \def\yStep{1.25}
  \def\yVBot{0.70}
  \def\yVTop{1.45}

  \foreach \N in {S0,S4,A0,C0,D0,F0,G0}{
    \draw[dashed,line width=0.9pt] ($(\N)+(0,\yVBot)$) -- ($(\N)+(0,\yVTop)$);
  }

  \draw[dotted,<->,line width=1.0pt] ($(S0)+(0,\yStep)$) -- node[above]{Step1}       ($(S4)+(0,\yStep)$);
  \draw[dotted,<->,line width=1.0pt] ($(S4)+(0,\yStep)$) -- node[above]{Step2}       ($(A0)+(0,\yStep)$);
  \draw[dotted,<->,line width=1.0pt] ($(A0)+(0,\yStep)$) -- node[above]{Step3-1}     ($(C0)+(0,\yStep)$);
  \draw[dotted,<->,line width=1.0pt] ($(C0)+(0,\yStep)$) -- node[above]{\(\cdots\)}  ($(D0)+(0,\yStep)$);
  \draw[dotted,<->,line width=1.0pt] ($(D0)+(0,\yStep)$) -- node[above]{Step3-(L-1)} ($(F0)+(0,\yStep)$);
  \draw[dotted,<->,line width=1.0pt] ($(F0)+(0,\yStep)$) -- node[above]{Step4}       ($(G0)+(0,\yStep)$);

  \end{tikzpicture}
  \caption{Procedure of the coordinate transformations $\pi$ in the proof of the Main Theorem}
  \caption*
  {\footnotesize The red dashed ring at the endpoint of Step~2 indicates that the subsequent operations are performed on a restricted neighborhood (a subset) around that point.}
  \label{fig3}
\end{figure}

\section{Conclusion}\label{sec_conclusion}
In this paper, we derived a broadly applicable formula that gives an upper bound for local learning coefficients
of three-layer neural networks.
By applying this formula, we obtained upper bounds for the learning coefficients of
three-layer neural networks in general settings for non-polynomial real-analytic activation functions,
and, in the polynomial case, under the restriction that the true distribution has no hidden units.

As future work, it remains important to analyze the loci where the rank condition in Condition~(ii) fails.
Such an analysis may lead to sharper upper bounds, or to additional conditions under which equality holds in the upper-bound formula derived in this paper.  
Another direction is to extend the methodology developed in this study to deep neural networks beyond the three-layer setting.
It also remains important to treat polynomial activation functions when the true distribution has hidden units, and to investigate empirically whether these upper bounds provide useful information in model comparison and model-selection problems.

\section*{Acknowledgments}
I am grateful to Professor Joe Suzuki of the University of Osaka for teaching me the basics of Bayesian theory and for providing a research topic that bridges algebraic geometry and statistics. 
I also thank him for his valuable comments and advice and for carefully reading this manuscript.
I express my gratitude to Professor Sumio Watanabe, who proposed the concept of the learning coefficient.

\appendix  

\def\thesection{Appendix \Alph{section}}
\renewcommand{\thesubsection}{\Alph{section}.\arabic{subsection}}

\renewcommand{\theprop}{\Alph{section}.\arabic{prop}}
\renewcommand{\thelem}{\Alph{section}.\arabic{lem}}
\renewcommand{\thecor}{\Alph{section}.\arabic{cor}}
\renewcommand{\thedfn}{\Alph{section}.\arabic{dfn}}
\renewcommand{\theex}{\Alph{section}.\arabic{ex}}
\renewcommand{\therem}{\Alph{section}.\arabic{rem}}
\renewcommand{\thethm}{\Alph{section}.\arabic{thm}}
\renewcommand{\theequation}{\Alph{section}.\arabic{equation}}

\section{Verification of the Assumptions of the Main Theorem}\label{sec_app_mainthm}
    In this section, we verify that the three-layer neural network considered in Section~\ref{exsec01} satisfies the assumptions of the Main Theorem.
    Let $\phi$ and $\Phi$ be $\mathbb{R}^M$-valued analytic functions representing the statistical model with parameters $(\theta,a,b)$ and the true distribution, respectively:
    \begin{equation}\label{eq_exNNcond02}
      Y = \phi(X|\theta,a,b) + \mathcal{N},\quad Y = \Phi(X) + \mathcal{N}.
    \end{equation}
    We assume that the origin $(\theta,a,b)=(0,0,0)$ is a realization parameter.
    Letting $Z:=Y-\Phi(X)$, the log-likelihood ratio function can be written as
    \begin{equation}\label{eq_exNNcond00}
      f
      = -Z^\top\left(\phi(X|\theta,a,b) - \Phi(X)\right)
      + \frac{1}{2}\left\|\phi(X|\theta,a,b) - \Phi(X)\right\|^2.
    \end{equation}
    We denote the $k$-th components of $Z$, $\phi(X|\theta,a,b)$, and $\Phi(X)$ by $Z_k$, $\phi_k$, and $\Phi_k$, respectively $(1\leq k\leq M)$.

 \subsection{On $P_1$}\label{subsec_NN01}
      We first consider the case where $\sigma$ is not a polynomial, as treated in Section~\ref{sec_ex_nonpolynomial}.
      By translating the parameters so that the realization parameter is at the origin, we can write
      \[
        \phi_k=
        \sum_{i=1}^{H^*}(a_{k,i}+a_{k,i}^*)
        \sigma
        \left(
          (b_i+b_i^*)^\top X
        \right)
        +\sum_{i=H^*+1}^H a_{k,i}\sigma
        \left(b_i^\top X\right).
      \]
      We divide the parameters into the following three groups:
      \begin{gather*}
        \boldsymbol{\theta}:=\{a_{\cdot,i}, b_i\mid 1\leq i\leq H^*\},\\
        \boldsymbol{a}:=\{a_{\cdot,i}\in\mathbb{R}^M\mid H^*+1\leq i\leq H\},\quad
        \boldsymbol{b}:=\{b_i\in\mathbb{R}^N\mid H^*+1\leq i\leq H\},
      \end{gather*}
      and verify that the assumptions of the Main Theorem are satisfied under this parametrization.
      We have
      \begin{align*}
        \left.
          \frac{\partial\phi_k}{\partial a_{k,i}}
        \right|_{(\boldsymbol{\theta},\boldsymbol{a},\boldsymbol{b})=0}
        =&\sigma\left(b_i^{*\top}X\right)
        & (1\leq i\leq H^*, 1\leq k\leq M), \\
        \left.
          \frac{\partial\phi_k}{\partial b_{i,j}}
        \right|_{(\boldsymbol{\theta},\boldsymbol{a},\boldsymbol{b})=0}
        =&a_{k,i}^*\sigma^\prime\left(b_i^{*\top}X\right)X_j
        & (1\leq i\leq H^*, 1\leq j\leq N),
      \end{align*}
      and
      \begin{align*}
        \phi_k(X|\boldsymbol{\theta}=0)-\Phi_k(X)
        =&\sum_{i=H^*+1}^H a_{k,i}\sigma\left(b_i^\top X\right)\\
        =&\sum_{i=H^*+1}^H a_{k,i}\sum_{s=1}^\infty
        \frac{\sigma^{(m_s)}(0)}{m_s!}\left(\textstyle\sum_{j=1}^N b_{i,j}X_j\right)^{m_s}\\
        =&\sum_{s=1}^\infty
        \sum_{|\boldsymbol{h}|=m_s}
        \sum_{i=H^*+1}^H
        a_{k,i}\prod_{j=1}^N b_{i,j}^{h_j}
        \cdot
        \frac{\sigma^{(m_s)}(0)\prod_{j=1}^NX_j^{h_j}}{\prod_{j=1}^Nh_j!}\\
        =&\sum_{s=1}^\infty\sum_{|\boldsymbol{h}|=m_s}
        g_{s,\boldsymbol{h},k}(\boldsymbol{a},\boldsymbol{b})R_{s,\boldsymbol{h}}(X).
      \end{align*}
      Here, for $h_j\geq 0$, we define the multi-index $\boldsymbol{h}:=(h_1,\ldots,h_N)$ and $|\boldsymbol{h}|:=h_1+\cdots+h_N$, and define the polynomial
      $g_{s,\boldsymbol{h},k}(\boldsymbol{a},\boldsymbol{b}):=\sum_{i=H^*+1}^Ha_{k,i}\prod_{j=1}^N b_{i,j}^{h_j}$
      and the random variable $R_{s,\boldsymbol{h}}(X):=\sigma^{(|\boldsymbol{h}|)}(0)\prod_{j=1}^NX_j^{h_j}/\prod_{j=1}^Nh_j!$.
      Noting that, for each $s$, the number of such polynomials $g$ is $n_s^\prime:=\binom{m_s+N-1}{m_s}$, 
      we impose an appropriate order on the index $\boldsymbol{h}$ and relabel it by a new index $n$, so that
      \begin{equation}\label{eq_exNNcondP1_01}
        \phi_k(X|\boldsymbol{\theta}=0)-\Phi_k(X)
        =\sum_{s=1}^\infty\sum_{n=1}^{n_s^\prime}g_{s,n,k}(\boldsymbol{a},\boldsymbol{b})R_{s,n}(X).
      \end{equation}
      Substituting (\ref{eq_exNNcondP1_01}) into (\ref{eq_exNNcond00}), we obtain
      \begin{align*}
        &f(X|\theta=0,\boldsymbol{a},\boldsymbol{b})\\
        =& -\sum_{k=1}^M Z_k
        \sum_{s=1}^\infty \sum_{n=1}^{n_s^\prime} g_{s,n,k}(\boldsymbol{a},\boldsymbol{b})R_{s,n}(X)
        + \frac{1}{2}\sum_{k=1}^M
        \left\{
          \sum_{s=1}^\infty\sum_{n=1}^{n_s^\prime} g_{s,n,k}(\boldsymbol{a},\boldsymbol{b})R_{s,n}(X)
        \right\}^2\\
        =&  \sum_{s=1}^\infty \sum_{k=1}^M \sum_{n=1}^{n_s^\prime}
        g_{s,n,k}(\boldsymbol{a},\boldsymbol{b})
        (-Z_k R_{s,n}(X))
        + \sum_{s=1}^\infty h_s(\boldsymbol{a},\boldsymbol{b})W_s.
      \end{align*}
      Here, the \(W_s\) are random variables, and $h_s(\boldsymbol{a},\boldsymbol{b})\in$ ideal $(g_{1,1,1}(\boldsymbol{a},\boldsymbol{b}),g_{1,1,2}(\boldsymbol{a},\boldsymbol{b}),\ldots)^2$.
      By regarding $(k,n)$ as a single index $n$, the above expression has the same form as in the Main Theorem when
      $
        (\gamma,m_s,n_s)=(\infty,m_s,M\cdot\binom{m_s+N-1}{m_s}).
      $
      Moreover, conditions (i) and (iv) are satisfied.

      We next verify condition (ii) of the Main Theorem.
      For each fixed $k\in\{1,\ldots,M\}$ and for $i=1,\ldots,H-H^*$, we have
      \[
        \left.
          \textstyle\frac{\partial g_{s,\boldsymbol{h},k}}{\partial a_{k,H^*+i}}
        \right|_{\boldsymbol{a}=0}
        =
        \textstyle\prod_{j=1}^N b_{H^*+i,j}^{h_j}.
      \]
      Therefore, by Lemma~\ref{lem_genVander} stated below, the matrix obtained by arranging
      $H-H^*$ of these functions $g_{s,\boldsymbol{h},k}$ in increasing order of degree is nonsingular
      for generic $\boldsymbol{b}\neq 0$.
      The full Jacobian is the block diagonal matrix consisting of $M$ copies of this nonsingular matrix,
      and hence is nonsingular for the same generic $\boldsymbol{b}\neq 0$.

      Finally, we verify condition (iii) of the Main Theorem.
      For real numbers ${t_i, u_{s,n,k}}$, assume that
      \[
      \sum_{i=1}^r t_i
      \left.
        \frac{\partial f}{\partial \theta_i}
      \right|_{(\boldsymbol{\theta},\boldsymbol{a},\boldsymbol{b})=0}
      + \sum_{s=1}^L \sum_{k=1}^M \sum_{n=1}^{n_s^\prime}
      u_{s,n,k}Z_{k}R_{s,n}(X) = 0
      \quad q\text{-a.s.}
      \]
      Since (\ref{eq_exNNcond00}) implies
      \begin{equation*}
        \forall i=1,\ldots,r,\quad
          \left.\frac{\partial f}{\partial \theta_i}\right|_{(\boldsymbol{\theta},\boldsymbol{a},\boldsymbol{b})=0}
        = -\sum_{k=1}^M Z_k
            \left.\frac{\partial \phi_k}{\partial \theta_i}\right|_{(\boldsymbol{\theta},\boldsymbol{a},\boldsymbol{b})=0},
      \end{equation*}
      we can transform the above equality as follows:
      \[
        \sum_{k=1}^M
        \left\{
          \sum_{i=1}^r t_i
          \left.
            \frac{\partial \phi_k}{\partial \theta_i}
          \right|_{(\boldsymbol{\theta},\boldsymbol{a},\boldsymbol{b})=0}
          +\sum_{s=1}^L \sum_{n=1}^{n_s^\prime} u_{s,n,k}R_{s,n}(X)
        \right\}Z_k=0
        \quad q\text{-a.s.}
      \]
      Since the conditional variance of the left-hand side given $X$ is $0$, and since $Z|X\sim\mathcal{N}$, we obtain
      \[
        \forall k=1,\ldots,M\quad
        \sum_{i=1}^r t_i
        \left.
          \frac{\partial \phi_k}{\partial \theta_i}
        \right|_{(\boldsymbol{\theta},\boldsymbol{a},\boldsymbol{b})=0}
        +\sum_{s=1}^L \sum_{n=1}^{n_s^\prime}
        u_{s,n,k}R_{s,n}(X)
        =0
        \quad q_X\text{-a.s.}
      \]
      Since every column vector of $A^*$ is assumed to be nonzero,
      for each $i$ there exists $k$ such that $a_{k,i}^*\neq 0$.
      For this choice of $k$, the linear independence assumption
      \eqref{eq_sing01_01} implies that
      \[
        t_i=0,
        \qquad
        u_{s,n,k}=0
        \quad
        (1\leq s\leq L,~1\leq n\leq n_s^\prime).
      \]
      Repeating this argument for each $i$, we obtain $t_i=0$ for all $i$.
      Substituting this into the previous identity and using again the linear independence of ${R_{s,n}(X)}$, we conclude that $u_{s,n,k}=0$ for all $k,s,n$.

      Therefore, at the realization parameter $P_1$, we have verified that the assumptions of the Main Theorem are satisfied when
      $
        (r,\alpha,\beta,\gamma,m_s,n_s)=((M+N)H^*,M(H-H^*),N(H-H^*),\infty,m_s,M\binom{m_s+N-1}{m_s}).
      $

      Finally, we consider the case where $\sigma$ is a polynomial, treated in Section~\ref{sec_ex_polynomial}, under the restriction that $H^*=0$.
      Since $\sigma(x)=\sum_{s=1}^S c_sx^{m_s}$, the same proof as above applies, except that the range of the index $s$ changes from $1,\ldots,\infty$ to $1,\ldots,S$, and there is no parameter corresponding to $\theta$.
      Hence, at the realization parameter $P_1$, the assumptions of the Main Theorem are satisfied when
      $
        (r,\alpha,\beta,\gamma,m_s,n_s)=(0,MH,NH,S,m_s,M\binom{m_s+N-1}{m_s}).
      $
      
      \begin{lem}\label{lem_genVander}
        Let
        $
        \boldsymbol{b}_i:=
        \begin{pmatrix}
          b_{i,1} & b_{i,2} & \cdots & b_{i,N}
        \end{pmatrix}
        \in\mathbb{R}^N,~
        B:=(b_{i,j})_{1\leq i \leq H, 1\leq j\leq N}
        \in\mathbb{R}^{H\times N}.
        $
        For any choice of $H$ distinct monomials $m_1,\ldots,m_H$, define
        $M(B):=\left(m_k(\boldsymbol{b}_i)\right)_{1\leq i,k\leq H}$.
        Then $M(B)$ is nonsingular for generic $B$.
      \end{lem}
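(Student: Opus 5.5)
The approach is to view $\det M(B)$ as a polynomial in the $HN$ entries of $B$ and show that this polynomial is not identically zero. The zero set of a nonzero polynomial on $\mathbb{R}^{HN}$ is a proper algebraic subset, hence closed, nowhere dense and of Lebesgue measure zero. This gives nonsingularity for generic $B$, and in particular for $B$ in an open dense set, which can be intersected with the set of $B$ whose relevant rows are nonzero. That is what the verification of Condition~(ii) requires.

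To see that $\det M(B)$ is a nonzero polynomial, I would expand it by the Leibniz formula:
\[
  \det M(B)=\sum_{\pi\in\mathfrak{S}_H}\operatorname{sgn}(\pi)\prod_{i=1}^H m_{\pi(i)}(\boldsymbol{b}_i).
\]
Each summand is a monomial in the variables $b_{i,j}$. The part involving the variables of row $i$ is exactly the monomial $m_{\pi(i)}$ evaluated on $\boldsymbol{b}_i$. The key observation is that these products are distinct monomials for distinct permutations. Knowing the full monomial $\prod_i m_{\pi(i)}(\boldsymbol{b}_i)$, one reads off, for each row $i$, the exponent vector in the variables $b_{i,1},\ldots,b_{i,N}$. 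Since the $m_k$ are distinct monomials, this exponent vector determines $\pi(i)$ uniquely. So distinct $\pi$ give distinct monomials, no cancellation occurs, and the polynomial has $H!$ nonzero coefficients $\pm1$. Hence it is not identically zero.

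Finally, I would record the standard fact that a nonzero real polynomial cannot vanish on a nonempty open set. This follows by induction on the number of variables, or from the identity theorem for analytic functions. Consequently $\{B:\det M(B)\neq0\}$ is open and dense with null complement.

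For the application, I would take the $H-H^*$ monomials $\prod_j b_{H^*+i,j}^{h_j}$ with distinct multi-indices $\boldsymbol{h}$ chosen in increasing order of degree. The lemma then produces $\boldsymbol{b}$, with no zero rows, at which the block is invertible. If $\sum_s n'_s<H-H^*$, one instead takes all available monomials and gets full column rank from a nonzero maximal minor by the same argument.

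There is no real obstacle here. The only point needing care is the injectivity of $\pi\mapsto\prod_i m_{\pi(i)}(\boldsymbol{b}_i)$, which uses that the variable sets of different rows are disjoint.
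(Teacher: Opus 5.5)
Your proof is correct. You make the same reduction as the paper: it suffices that $\det M(B)$ is not the zero polynomial. You then prove this by a different route.

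The paper specializes to the curve $\boldsymbol{b}_i=(t_i,t_i^{L},\ldots,t_i^{L^{N-1}})$, with $L$ larger than every exponent. Then $m_k(\boldsymbol b_i)=t_i^{E_k}$, and the $E_k$ are pairwise distinct by uniqueness of base-$L$ expansions. The paper then cites the classical fact that a generalized Vandermonde matrix $(t_i^{E_k})$ with distinct exponents is nonsingular for distinct positive $t_i$.

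You instead expand by Leibniz and observe that no cancellation can occur. Different rows involve disjoint sets of variables, and the $m_k$ have distinct exponent vectors. Hence the monomials $\prod_i m_{\pi(i)}(\boldsymbol b_i)$ are pairwise distinct.

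Your argument is self-contained and more elementary. The generalized Vandermonde theorem is itself nontrivial: it is usually proved via Descartes' rule of signs or a Rolle-type induction, and that is exactly where positivity of the $t_i$ is needed. Your argument also shows more about the polynomial, namely that it has exactly $H!$ terms with coefficients $\pm1$.

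The paper's argument is constructive instead. It gives an explicit family of witnesses, all with positive entries, which immediately supplies a point $\boldsymbol b\neq 0$ for Condition~(ii). You get such a point by intersecting the open dense non-vanishing set with the set of matrices with nonzero rows. Either approach suffices for the lemma as stated.
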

      
      \begin{proof}
        Since $\det M(B)$ is a polynomial in the entries of $B$, it suffices to show that
        $
          \{B\in\mathbb{R}^{H\times N}\mid\det M(B)\neq 0\}\neq \emptyset.
        $
        Write the chosen $H$ monomials as
        $m_k(\boldsymbol{x}):=\prod_{j=1}^N x_j^{h_{j,k}}$
        (where $(h_{1,k},\ldots,h_{N,k})$ are distinct),
        and let $D$ be the maximum total degree among them.
        Take any integer $L>D$.
        For any pairwise distinct positive real numbers $t_1,\ldots,t_H$, consider the case where
        $
          \boldsymbol{b}_i=\begin{pmatrix}
            t_i & t_i^L & t_i^{L^2} & \cdots & t_i^{L^{N-1}}
          \end{pmatrix}
          \quad(1\leq i \leq H).
        $
        Then
        \begin{align}
          m_k(\boldsymbol{b}_i)
          =\prod_{j=1}^N\left(t_i^{L^{j-1}}\right)^{h_{j,k}}
          =t_i^{\sum_{j=1}^N h_{j,k}L^{j-1}}
          =t_i^{E_k}
          ,\quad
          E_k:=\sum_{j=1}^N h_{j,k}L^{j-1}, \label{eq_genVander_01}
        \end{align}
        for $k=1,\ldots,H$.
        Since each $h_{j,k}<L$, the uniqueness of base-$L$ expansions implies that, if the $(h_{1,k},\ldots,h_{N,k})$ are pairwise distinct, 
        then the corresponding $E_k$ are also pairwise distinct.
        Therefore,
        $
          M(B)=\left(t_i^{E_k}\right)_{1\leq i,k\leq H},
          ~ E_i\neq E_j\ (i\neq j).
        $
        Thus $M(B)$ is a generalized Vandermonde matrix, and by its standard property \cite{RobbinSalamon2000}, we have $\det M(B)\neq 0$.
        This completes the proof.
      \end{proof}

    \subsection{On $P_2$}\label{subsec_NN02}
      In this subsection, we consider the case $N=1$.
      Translating the parameters so that the realization parameter is at the origin, we can write
      \begin{align*}
        \phi_k=&
        \sum_{i=1}^{H^*-1}(a_{k,i}+a_{k,i}^*)
        \sigma\left(
          (b_i+b_i^*) X
        \right)
        +(a_{k,H^*}+a_{k,H^*}^*)\sigma
        \left(
          (b_{H^*}+b_{H^*}^*) X
        \right)\\
        &+\sum_{i=H^*+1}^H a_{k,i}\sigma\left(
          (b_i+b_{H^*}^*) X
        \right).
      \end{align*}
      Since every column vector of $A^*$ is assumed to be nonzero,
      after relabeling the output coordinates if necessary, we may assume that $a_{1,H^*}^*\neq 0$.
      We perform the coordinate transformation
      \begin{align*}
        a_{\cdot,H^*}^\prime := a_{\cdot,H^*}+\cdots +a_{\cdot,H} ,\quad
        b_{H^*}^\prime := b_{H^*}
        +\frac{\sum_{i=H^*+1}^H a_{1,i}b_{i}}
        {a_{1,H^*}^*-\sum_{i=H^*+1}^H a_{1,i}},
      \end{align*}
      and, with a slight abuse of notation, denote the transformed coordinates
      $a_{\cdot,H^*}^\prime$ and $b_{H^*}^\prime$ again by
      $a_{\cdot,H^*}$ and $b_{H^*}$, respectively.
      We divide the parameters into the following three groups:
      \begin{gather*}
        \boldsymbol{\theta}:=\{a_{\cdot,i}, b_i\mid 1\leq i\leq H^*\},\\
        \boldsymbol{a}:=\{a_{\cdot,i}\in\mathbb{R}^M\mid H^*+1\leq i\leq H\},\quad
        \boldsymbol{b}:=\{b_i\in\mathbb{R}\mid H^*+1\leq i\leq H\},
      \end{gather*}
      and verify that the assumptions of the Main Theorem are satisfied for this parametrization.
      We have
      \begin{align*}
        \left.
          \frac{\partial\phi_k}{\partial a_{k,i}}
        \right|_{(\boldsymbol{\theta},\boldsymbol{a},\boldsymbol{b})=0}
        =&
        \sigma\left(b_i^{*}X\right) 
        \quad (1\leq i\leq H^*,\ 1\leq k\leq M),
        \\
        \left.
          \frac{\partial\phi_k}{\partial b_i }
        \right|_{(\boldsymbol{\theta},\boldsymbol{a},\boldsymbol{b})=0}
        =&
        a_{k,i}^*\sigma^\prime\left(b_i^{*}X\right)X
        \quad (1\leq i\leq H^*).
      \end{align*}
      For $s\geq 1$, define
      \begin{align*}
        g_{s,k}(\boldsymbol{a},\boldsymbol{b})
        :=&
        (-1)^s
        \frac{a^*_{k,H^*}-\sum_{i=H^*+1}^H a_{k,i}}
        {(a^*_{1,H^*}-\sum_{i=H^*+1}^H a_{1,i})^{s}}
        \left(\sum_{i=H^*+1}^H a_{1,i}b_i \right)^s
        +\sum_{i=H^*+1}^H a_{k,i}b_i^s,\\
        R_s(X)
        :=&
        \frac{\sigma^{(s)}(b_{H^*}^{*}X)X^s}{s!}.
      \end{align*}
      Then $g_{s,k}$ is homogeneous of degree $s$ in $\boldsymbol{b}$.
      Moreover, by Taylor expanding around $b_{H^*}^*X$, we obtain
      \begin{align*}
        \phi_k(X|\boldsymbol{\theta}=0)-\Phi_k(X)
        =&
        \textstyle\left(a_{k,H^*}^*-\sum_{i=H^*+1}^H a_{k,i}\right)
        \sigma
        \textstyle\left(
          \left(
            b_{H^*}^*
            -\frac{\sum_{i=H^*+1}^H a_{1,i}b_i}
            {a_{1,H^*}^*-\sum_{i=H^*+1}^H a_{1,i}}
          \right)X
        \right)\\
        &+\textstyle\sum_{i=H^*+1}^H a_{k,i}
        \sigma\left((b_i+b_{H^*}^*)X\right)
        -a_{k,H^*}^*\sigma\left(b_{H^*}^*X\right)\\
        =&
        \sum_{s=1}^{\infty}
        g_{s,k}(\boldsymbol{a},\boldsymbol{b})R_s(X).
      \end{align*}
  Since \(g_{1,1}=0\), in the block \(s=1\) it is enough to consider only the terms corresponding to \(k=2,\ldots,M\).
  For these terms, set
  \[
    n_s:=
    \begin{cases}
      M-1 & \text{if }s=1,\\
      M   & \text{if }s\geq 2.
    \end{cases}
  \]
  We then verify that the assumptions of the Main Theorem are satisfied for
  \[
    (r,\alpha,\beta,\gamma,m_s,n_s)
    =
    \left(
      (M+1)H^*,\,
      M(H-H^*),\,
      H-H^*,\,
      \infty,\,
      s,\,
      n_s
    \right).
  \]
  Here we verify only Condition (ii); the other conditions follow by the same argument as in Section~\ref{subsec_NN01}.

  First suppose that $M\geq2$.
  For $k=2,\ldots,M$, we have
  \[
    g_{1,k}
    =
    -\frac{
      a^*_{k,H^*}-\sum_{\ell=H^*+1}^{H}a_{k,\ell}
    }{
      a^*_{1,H^*}-\sum_{\ell=H^*+1}^{H}a_{1,\ell}
    }
    \sum_{i=H^*+1}^{H}a_{1,i}b_i
    +
    \sum_{i=H^*+1}^{H}a_{k,i}b_i .
  \]
  Hence
  \[
    \left.
    \frac{\partial g_{1,k}}{\partial a_{1,H^*+i}}
    \right|_{\boldsymbol a=0}
    =
    -\frac{a^*_{k,H^*}}{a^*_{1,H^*}}b_{H^*+i},
    \qquad
    \left.
    \frac{\partial g_{1,k}}{\partial a_{k,H^*+i}}
    \right|_{\boldsymbol a=0}
    =
    b_{H^*+i}.
  \]
  On the other hand, for $s\geq2$, the first term of $g_{s,k}$ is of order at least two
  in $\boldsymbol a$ and therefore does not contribute to the $a$-Jacobian at
  $\boldsymbol a=0$.
  Thus
  \[
    \left.
    \frac{\partial g_{s,k}}{\partial a_{\ell,H^*+i}}
    \right|_{\boldsymbol a=0}
    =
    \begin{cases}
      b_{H^*+i}^{\,s} & \text{if } \ell=k,\\
      0 & \text{if } \ell\neq k.
    \end{cases}
  \]

  We order the components of $g$ as
  \[
    g_{2,1},\ldots,g_{H-H^*+1,1},
    g_{1,2},\ldots,g_{H-H^*,2},
    \ldots,
    g_{1,M},\ldots,g_{H-H^*,M},
  \]
  and order the components of $\boldsymbol{a}$ as
  \[
    a_{1,H^*+1},\ldots,a_{1,H},
    a_{2,H^*+1},\ldots,a_{2,H},
    \ldots,
    a_{M,H^*+1},\ldots,a_{M,H}.
  \]
  With this ordering, the Jacobian matrix is block lower triangular, and therefore
  \begin{gather*}
    \det
    \left.
      \frac{\partial g}{\partial a}
    \right|_{\boldsymbol a=0}
    =
    (\det B_1)^{M-1}\det B_2,\\
    B_1:=
    \left(
      b_{H^*+i}^{\,j}
    \right)_{1\leq i,j\leq H-H^*},
    \qquad
    B_2:=
    \left(
      b_{H^*+i}^{\,1+j}
    \right)_{1\leq i,j\leq H-H^*}.
  \end{gather*}
  If $b_{H^*+1},\ldots,b_H$ are chosen to be distinct nonzero real numbers,
  then both $B_1$ and $B_2$ are nonsingular, because each is obtained from an ordinary
  Vandermonde matrix by multiplying rows by nonzero factors.
  Hence
  $
    \det
    \left.
      \frac{\partial g}{\partial a}
    \right|_{\boldsymbol a=0}
    \neq 0.
  $
  This verifies Condition (ii).

  When $M=1$, the same argument shows that the determinant of the corresponding
  Jacobian is $\det B_2$.
  Hence the Jacobian is nonsingular for the same choice of
  $b_{H^*+1},\ldots,b_H$, and Condition (ii) also holds in this case.

    \subsection{Assumption on linear independence}\label{subsec_lin_independent}
      We show that, when $\sigma$ is a non-polynomial real analytic function, the linear independence conditions (\ref{eq_sing01_01}) and (\ref{eq_sing02_01}) hold under the following conditions.
      Throughout this subsection, we use the same notation as in Section~\ref{exsec01}.
      \begin{prop}\label{prop_lin_independent01}
        Assume that the following three conditions hold.
        \begin{itemize}
          \item[(1)]
            The distribution $q_X$ of $X$ is absolutely continuous with respect to Lebesgue measure on an open neighborhood $U\subset\mathbb{R}^N$ of the origin, and its density is positive a.e.\ on $U$.
          \item[(2)]
            $b_1^*,\ldots,b_{H^*}^*\in\mathbb{R}^N$ satisfy $b_i^*\neq 0$ and $b_{i}^*\neq \pm b_{j}^*$ for $i\neq j$.
          \item[(3)]
            $\sigma$ is a non-polynomial real analytic function satisfying $\sigma(0)=0$.
        \end{itemize}
        Then the family in (\ref{eq_sing01_01}) is linearly independent.
        Furthermore, when $N=1$, the family in (\ref{eq_sing02_01}) is also linearly independent.
      \end{prop}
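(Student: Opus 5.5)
Suppose $\sum_i c_i\sigma(b_i^{*\top}X)+\sum_{i,j}d_{ij}\sigma'(b_i^{*\top}X)X_j+\sum_{s\le S}\sum_{|\boldsymbol h|=m_s}e_{\boldsymbol h}X^{\boldsymbol h}=0$ holds $q_X$-a.s. We will show that every coefficient vanishes. By condition (1), the left-hand side, call it $F(x)$, vanishes Lebesgue-a.e. on the open set $U$. Since $\sigma$ is real analytic, $F$ is real analytic on a neighborhood of the origin, so $F\equiv0$ on the connected component of its domain of analyticity containing $U$. We then restrict to lines through the origin. For each direction $v\in\mathbb{R}^N$, set $x=tv$ and obtain the one-variable identity
\[
\sum_i c_i\sigma(t\,b_i^{*\top}v)+t\sum_i (d_i^\top v)\,\sigma'(t\,b_i^{*\top}v)+P_v(t)=0,
\]
where $P_v(t)=\sum_{s\le S}\sum_{|\boldsymbol h|=m_s}e_{\boldsymbol h}v^{\boldsymbol h}t^{m_s}$ is a polynomial. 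By condition (2), we can choose $v$ in an open dense set so that the numbers $w_i:=b_i^{*\top}v$ are nonzero and satisfy $|w_i|\ne|w_j|$ for $i\ne j$. This reduces the problem to a one-dimensional statement.

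The key step, and the main obstacle, is the following one-dimensional lemma. If $\sigma$ is analytic and non-polynomial, and $w_1,\dots,w_{H^*}$ are nonzero with pairwise distinct absolute values, then the functions $\sigma(w_it)$, $t\sigma'(w_it)$, $t^k$ are linearly independent as germs at $0$. Working only with the Taylor series at $0$ is awkward, because the coefficient of $t^k$ is $\sigma^{(k)}(0)/k!\cdot\bigl(\sum_i c_iw_i^k+k\sum_i d'_iw_i^{k}\bigr)$ plus a polynomial contribution. For all $k>\max m_s$ with $\sigma^{(k)}(0)\ne0$, which form an infinite set since $\sigma$ is not a polynomial, we therefore get
\[
\sum_i (c_i+k d'_i)\,w_i^k=0,\qquad d'_i:=d_i^\top v .
\]
I would then order the indices so that $|w_1|>|w_2|>\cdots$, divide by $w_1^k$, and let $k\to\infty$ along this infinite set. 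The terms with $i\ge2$ decay geometrically even after multiplication by $k$. Hence $(c_1+kd'_1)(\pm1)^k\to0$, and this forces $d'_1=0$ and $c_1=0$. Induction on $i$ then kills all $c_i$ and $d'_i$. Consequently $P_v\equiv0$. The difficulty lies in this asymptotic Vandermonde-type argument, which is where the hypotheses $b_i^*\ne\pm b_j^*$ and "non-polynomial" are used. Note that $\pm$ matters because the admissible $k$ may all share one parity, as for $\tanh$.

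Returning to $N$ dimensions, $c_i=0$ directly. The vanishing of $d_i^\top v$ for all $v$ in an open set gives $d_i=0$. Since $P_v\equiv0$ for all such $v$, each homogeneous polynomial $\sum_{|\boldsymbol h|=m_s}e_{\boldsymbol h}v^{\boldsymbol h}$ vanishes on an open set, so $e_{\boldsymbol h}=0$. For the $N=1$ family (\ref{eq_sing02_01}), I use the same Taylor-coefficient argument with the extra terms $\sigma^{(s)}(w_{H^*}t)t^s$, $2\le s\le S$, where $w=b^*$ itself. Grouping all terms attached to $w_{H^*}$ gives coefficients of the form $\bigl(c+kd+\sum_s e_s\binom{k}{s}\bigr)w_{H^*}^k$, which is a polynomial in $k$ times $w_{H^*}^k$. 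The same dominant-term induction then shows that this polynomial in $k$ vanishes on an infinite set, hence identically. Its coefficients in the binomial basis $\{1,k,\binom{k}{2},\dots\}$ therefore all vanish. A small additional step handles the case $w_{H^*}=0$, which is excluded by $b_i^*\ne0$.
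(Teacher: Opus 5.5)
Your proof is correct and follows essentially the same route as the paper. Both arguments get vanishing on $U$ from the positive density and continuity, restrict to a generic line $x=tv$ avoiding the hyperplanes $b_i^{*\top}v=0$ and $(b_i^*\pm b_j^*)^\top v=0$, compare $t^k$-coefficients for the infinitely many $k$ beyond the polynomial degree with $\sigma^{(k)}(0)\neq0$, take the dominant-term limit $k\to\infty$ with induction on $|w_i|$, and finally recover $d_i$ and the monomial coefficients from an open set of directions; your binomial-basis argument for the $N=1$ family spells out what the paper only calls analogous. One computational slip: the $t^k$-coefficient of $t\,\sigma'(w_it)$ is $k\,\frac{\sigma^{(k)}(0)}{k!}\,w_i^{k-1}$, so the relation is $\sum_i (c_iw_i+kd_i')\,w_i^{k-1}=0$, and likewise your $N=1$ grouping should carry the factors $w_{H^*}^{-1}$ and $s!\,w_{H^*}^{-s}$; this is harmless because every $w_i\neq0$ and such rescalings do not affect the limiting argument.
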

      
      \begin{proof}
        We prove only (\ref{eq_sing01_01}); the proof for (\ref{eq_sing02_01}) is analogous.
        Let $a_i\in\mathbb{R}$, $c_{i,j}\in\mathbb{R}$, and $d_{\boldsymbol{h}}\in\mathbb{R}$, and set
        $
          c_i:=(c_{i,1},\ldots,c_{i,N})^\top\in\mathbb{R}^N
          ~ (1\leq i\leq H^*).
        $
        Assume that
        \begin{equation}\label{eq_prop_lin_independent01_2}
          \sum_{i=1}^{H^*} a_i\sigma(b_i^{*\top} X)
          +\sum_{i=1}^{H^*}\sum_{j=1}^N c_{i,j}\sigma^\prime(b_i^{*\top} X)X_j
          + \sum_{1\leq |\boldsymbol{h}|\leq S}d_{\boldsymbol{h}}X^{\boldsymbol{h}}
          =0\quad
          q_X\text{-a.s.}
        \end{equation}
        Let $F$ be a real analytic function on $U$.
        If $F(X)=0$ holds $q_X\text{-a.s.}$, then (1) implies that $F(x)=0$ for Lebesgue-a.e.\ $x\in U$.
        Since $F$ is real analytic on $U$, it follows that $F$ vanishes identically on $U$.     
        Therefore, (\ref{eq_prop_lin_independent01_2}) holds for all $x\in U$.
        In particular, for any $u\in\mathbb{R}^N$, if $\epsilon>0$ is sufficiently small and $x=tu$ with $|t|<\epsilon$, then
        \begin{equation*}
          \sum_{i=1}^{H^*} a_i\sigma((b_i^{*\top} u)t)
          + t\sum_{i=1}^{H^*} (c_{i}^\top u)\sigma^\prime((b_i^{*\top} u)t)
          + \sum_{s=1}^S p_s(u)t^s
          =0
          ~;\quad
          p_s(u):=\sum_{|\boldsymbol{h}|=s} d_{\boldsymbol{h}}u^{\boldsymbol{h}}.
        \end{equation*}
        By (2), if we avoid the finitely many hyperplanes
        \[
          b_i^{*\top} u=0,~
          (b_i^*-b_j^*)^\top u=0,~
          (b_i^*+b_j^*)^\top u=0
          \quad (1\leq i,j\leq H^*),
        \]
        then we can choose $u$ so that
        \begin{equation}\label{eq_prop_lin_independent01_3}
          \alpha_i(u):=b_i^{*\top} u\neq 0,
          \quad
          |\alpha_i(u)|\neq |\alpha_j(u)|\quad(i\neq j)
        \end{equation}
        hold simultaneously. The set $V$ of such $u$ is a nonempty open subset of $\mathbb{R}^N$.
        Fix any $u\in V$.
        Letting $\beta_{i}(u):=c_i^\top u$, we obtain
        \begin{equation}\label{eq_prop_lin_independent01_1}
          \sum_{i=1}^{H^*} a_i\sigma(\alpha_i(u)t)
          + t\sum_{i=1}^{H^*} \beta_i(u) \sigma^\prime(\alpha_i(u)t)
          + \sum_{s=1}^S p_s(u)t^s
          =0
          \quad
          (|t|<\epsilon).
        \end{equation}        
        Next, consider the Taylor expansion of $\sigma$ around the origin,
        $
          \sigma(t)=\sum_{m=1}^\infty s_m t^{m}.
        $
        Since $\sigma$ is not a polynomial by (3), the set $\{m\geq 1\mid s_m\neq 0\}$ is infinite.      
        For each $m>S$ with $s_m\neq 0$, comparing the coefficients of $t^m$ in (\ref{eq_prop_lin_independent01_1}) yields
        \[
          \sum_{i=1}^{H^*} a_i\alpha_i(u)^m + m\sum_{i=1}^{H^*} \beta_i(u)\alpha_i^{m-1}(u)
          =0.
        \]
        By (\ref{eq_prop_lin_independent01_3}), the index
        $
          k:=\arg\max_{1\leq i\leq H^*}|\alpha_i(u)|
        $
        is uniquely determined.
        Dividing both sides by $\alpha_k(u)^{m-1}$, we obtain
        \[
          a_k\alpha_k(u) + m\beta_k(u)
          + \sum_{i\neq k}a_i\alpha_i(u)\left(\frac{\alpha_i(u)}{\alpha_k(u)}\right)^{m-1}
          + m\sum_{i\neq k}\beta_i(u)\left(\frac{\alpha_i(u)}{\alpha_k(u)}\right)^{m-1}
          =0.
        \]
        Since $\left|\alpha_i(u)/\alpha_k(u)\right|<1$ for $i\neq k$, letting $m\to\infty$ along integers satisfying $m>S$ and $s_m\neq 0$, we obtain $\beta_k(u)=0$.
        Substituting this back into the same relation and letting $m\to\infty$ again, we obtain $a_k=0$.
        Repeating the same argument for the remaining indices in decreasing order of $|\alpha_i(u)|$, we conclude inductively that
        $
          a_i=\beta_i(u)=0
          ~ (1\leq i\leq H^*).
        $
        Since $\beta_i(u)=c_i^\top u=0$ for all $u\in V$, and $V$ is a nonempty open set, we can choose $u_1,\ldots,u_N\in V$ forming a basis of $\mathbb{R}^N$.
        Hence $c_i^\top u_\ell=0$ for all $\ell=1,\ldots,N$, which implies that $c_i=0$ for all $i$.
        Finally, substituting these equalities into (\ref{eq_prop_lin_independent01_2}), we obtain
        $
          \sum_{1\leq |\boldsymbol{h}|\leq S}d_{\boldsymbol{h}}X^{\boldsymbol{h}}=0
          ~ q_X\text{-a.s.}
        $
        Hence the polynomial $\sum_{1\leq |\boldsymbol{h}|\leq S}d_{\boldsymbol{h}}x^{\boldsymbol{h}}$ vanishes on $U$, and therefore all coefficients $d_{\boldsymbol{h}}$ must be zero.
      \end{proof}

    \section{Preparation for the Proof of the Main Theorem}\label{sec_prepf_mainthm}
  In this section, we state the lemmas needed to prove the Main Theorem.
  In the proof, after coordinate transformations called blow-ups, we extract monomial factors from the log-likelihood ratio function $f$ and verify that the remaining factors are not zero as random variables.
  Lemma~\ref{lem_Taylor} shows that, when a monomial factor can be extracted from $f$, its square can be extracted from the Kullback--Leibler divergence $K$.
  Lemma~\ref{lem_normal_crossing_chart} shows that, on a coordinate chart after a coordinate transformation, if the remaining factor is not zero as a random variable, then $K$ has a normal-crossing form.

  In what follows, we write the coordinates as
  $
  (u,v);~
  u=(u_1,\ldots,u_p),~
  v=(v_1,\ldots,v_q)
  $.
  Let $\mu_i,\nu_j$ be nonnegative integers, and write
  $
    u^\mu v^\nu
    :=
    \prod_i u_i^{\mu_i}\prod_j v_j^{\nu_j}
  $.
\begin{lem}\label{lem_Taylor}\leavevmode\par
  Suppose that the log-likelihood ratio function $f$ can be written as
  $
    f(X|u,v)
    =
    u^\mu v^\nu \tilde f(X|u,v)
  $,
  and that at least one $\mu_i$ is positive.
  Then there exists an analytic function $\tilde K(u,v)$ such that
  \[
    K(u,v)
    =
    u^{2\mu}v^{2\nu}\tilde K(u,v)
    ,\quad
    \tilde K(0,v)
    =
    \frac12
    \mathbb{E}_X\left[\tilde f(X|0,v)^2\right].
  \]
\end{lem}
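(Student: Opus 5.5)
The plan rests on the standard identity $K(w)=\mathbb{E}_X[f(X|w)]=\mathbb{E}_X\bigl[f+e^{-f}-1\bigr]$, valid because $\mathbb{E}_X[e^{-f(X|w)}]=\mathbb{E}_X[p(X|w)/q(X)]=1$. This rewrites $K$ as the expectation of $\psi(f)$, where $\psi(t):=t+e^{-t}-1=t^2\,\phi(t)$ and $\phi(t)=\sum_{k\ge 0}(-1)^k t^k/(k+2)!$ is entire with $\phi(0)=1/2$. Substituting $f=u^\mu v^\nu\tilde f$ gives
\[
  K(u,v)=u^{2\mu}v^{2\nu}\,\mathbb{E}_X\!\left[\tilde f(X|u,v)^2\,\phi\bigl(u^\mu v^\nu\tilde f(X|u,v)\bigr)\right],
\]
and I would define $\tilde K(u,v)$ to be this expectation. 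Since some $\mu_i>0$, the monomial $u^\mu v^\nu$ vanishes at $u=0$, so $\phi(\cdot)$ there equals $1/2$ and $\tilde K(0,v)=\tfrac12\mathbb{E}_X[\tilde f(X|0,v)^2]$, as claimed.

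The remaining issue, which I expect to be the main obstacle, is the analyticity of $\tilde K$ and the legitimacy of the formal manipulations. The paper assumes that $w\mapsto f(X|w)$ is $L^2(q)$-valued analytic and that expectation commutes with differentiation. I would take the more robust Taylor route, which avoids relying on $e^{-f}$ being integrable in a controlled way. First, $K$ is analytic and vanishes to second order along $\{f=0\}$, and $\mathbb{E}_X[\partial f]=0$. Expanding $K$ around a point, its second-order behaviour is governed by $\tfrac12\mathbb{E}_X[f^2]$ plus higher terms, each of which is an expectation of a polynomial in $f$ and its derivatives of total $f$-degree at least two.

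To make this precise, I would argue via divisibility. First, $\tilde f$ is itself analytic in the $L^2$ sense: dividing an analytic $L^2$-valued function by a monomial is legitimate because the power-series coefficients of $f$ are divisible, coefficientwise, by $u^\mu v^\nu$. Next, I would show that $K$ is divisible by $u^{2\mu}v^{2\nu}$ in the ring of analytic functions. Following Watanabe's standard lemma (Theorem 6.1-type), the key ingredient is that the function $\mathbb{E}_X[\psi(f)]/\mathbb{E}_X[f^2]$ is analytic and bounded. Concretely, I would write $K=\mathbb{E}_X[f^2\int_0^1 (1-s)e^{-sf}\,ds]$ and factor out $u^{2\mu}v^{2\nu}$ inside the expectation. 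The interchange is justified by the standing assumption that expectation and differentiation commute, which makes the resulting power series in $(u,v)$ converge.

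Finally, I would evaluate at $u=0$, where $sf=0$, so the integral equals $1/2$. This yields the stated formula for $\tilde K(0,v)$.
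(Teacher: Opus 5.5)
Your proposal is correct and is essentially the paper's proof. Both use the identity $K=\mathbb{E}_X[f+e^{-f}-1]$ with $t+e^{-t}-1=t^2\phi(t)$ (your $\phi$ is the paper's $\psi$), get divisibility of $K$ by $u^{2\mu}v^{2\nu}$ from interchanging expectation and differentiation, and use $\phi(0)=1/2$ at $u=0$. The detour through the ratio $\mathbb{E}_X[f+e^{-f}-1]/\mathbb{E}_X[f^2]$ is unnecessary. Also, instead of plugging $u=0$ into the expectation (which tacitly needs that expectation to be continuous up to $u=0$), the paper recovers $\tilde K(0,v)$ by applying $\partial_u^{2\mu}$ at $u=0$ under the expectation on $\{v^\nu\neq 0\}$ and then extending by continuity.
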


\begin{proof}
  Using the analytic function
  \[
    \psi(t):=
    \begin{cases}
      \dfrac{t+e^{-t}-1}{t^2}, & t\neq0,\\[1ex]
      \dfrac12, & t=0,
    \end{cases}
  \]
  we have
  \[
    f+e^{-f}-1=f^2\psi(f)
    =u^{2\mu} v^{2\nu} \tilde f(X|u,v)^2\psi(f).
  \]
  Also, since
  $
    e^{-f(X|u,v)}
    =
    p(X|u,v)/q(X)
  $,
  we have
  $
    \mathbb{E}_X[e^{-f(X|u,v)}]=1
  $.
  Therefore,
  \begin{equation}\label{lem_taylorNew_01}
    K(u,v)
    =
    \mathbb{E}_X
    \left[
      f(X|u,v)+e^{-f(X|u,v)}-1
    \right]
    =
    \mathbb{E}_X
    \left[
      u^{2\mu}v^{2\nu}
      \tilde f(X|u,v)^2
      \psi(f(X|u,v))
    \right].
  \end{equation}
  Since expectation and partial differentiation are interchangeable, in the Taylor expansion of $K$, every term whose degree in some $u_i$ is less than $2\mu_i$, or whose degree in some $v_j$ is less than $2\nu_j$, vanishes.
  Hence, by the analyticity of $K$, there exists an analytic function $\tilde K(u,v)$ such that
  \begin{equation}\label{lem_taylorNew_02}
    K(u,v)
    =
    u^{2\mu}v^{2\nu}\tilde K(u,v).
  \end{equation}

  We next determine $\tilde K(0,v)$.
  Differentiating the right-hand sides of \eqref{lem_taylorNew_01} and \eqref{lem_taylorNew_02} with respect to $u$ in the multi-index $2\mu$ and then setting $u=0$, and noting that at least one $\mu_i$ is positive and that $\psi(f(X|0,v))=\psi(0)=1/2$, we obtain, at every point satisfying $v^\nu\neq0$,
  \begin{equation}\label{lem_taylorNew_03}
    \tilde K(0,v)
    =
    \frac12
    \mathbb{E}_X
    \left[
      \tilde f(X|0,v)^2
    \right].
  \end{equation}
  Since both sides are continuous in $v$, this equality extends to points satisfying $v^\nu=0$.
  Therefore, \eqref{lem_taylorNew_03} holds for all $v$.
  This proves the lemma.
\end{proof}

In Lemma~\ref{lem_normal_crossing_chart}, we consider the following situation.
Take local coordinates $(\theta_1,\ldots,\theta_d)$ centered at the point under consideration, and for $d_1\leq d$, consider the blow-up $\pi$ centered at
$
  \theta_1=\cdots=\theta_{d_1}=0
$.
On the coordinate chart $U_i$ with reference coordinate $\theta_i~(1\leq i\leq d_1)$, this blow-up is defined by the coordinate transformation
  \begin{equation}\label{eq_lem_compact01}
    \begin{aligned}
      \pi|_{U_i} : ~
      \mathbb{R}^{d}\ni
      & ~(\theta_1^\prime,\ldots,\theta_{i-1}^\prime, 
      \theta_i,\theta_{i+1}^\prime,\ldots,\theta_{d_1}^\prime,
      \theta_{d_1+1},\ldots,\theta_d)\\
      & \mapsto
      (\theta_i\theta_1^\prime,\ldots,\theta_i\theta_{i-1}^\prime,
      \theta_i,\theta_i\theta_{i+1}^\prime,\ldots,\theta_i\theta_{d_1}^\prime,
      \theta_{d_1+1},\ldots,\theta_d)
      \in\mathbb{R}^d .
    \end{aligned}
  \end{equation}
We divide the coordinates after the transformation into two types.
First, we denote by $u=(u_1,\ldots,u_p)$ the parameters that approach $0$ when the original parameter approaches the origin.
These include the coordinate $\theta_i$ chosen as the reference coordinate in the blow-up, and also the coordinates $\theta_{d_1+1},\ldots,\theta_d$, which are not directly involved in the blow-up but approach $0$ near the origin.

Second, we denote by $v=(v_1,\ldots,v_q)$ the parameters newly introduced by the blow-up.
Here, the coordinates $\theta_j^\prime~(1\leq j\leq d_1,\ j\neq i)$ correspond to these parameters.
When the original parameter approaches the origin, the parameters $v$ do not necessarily approach $0$.
However, for $(\theta_1,\ldots,\theta_{d_1})\neq0$, if we choose a component $\theta_i$ with maximal absolute value as the reference coordinate, then
$
  |\theta_j^\prime|
  =
  \left|\theta_j/\theta_i\right|
  \leq1
$
for all $j\neq i$.
Thus, it is enough to regard $v$ as moving in a compact set $C$.

\begin{lem}\label{lem_normal_crossing_chart}\leavevmode\par
  Suppose that, on a coordinate chart after the blow-up $\pi$, we can write
  \[
    f(X|\pi(u,v))
    =
    u^\mu v^\nu\tilde f(X|u,v).
  \]
  Here, $\mu_i,\nu_j$ are nonnegative integers, and at least one $\mu_i$ is positive.
  Let \(C\subset\mathbb{R}^q\) be compact.
  Suppose moreover that, for every $v\in C$,
  \[
    \tilde f(X|0,v)\not\equiv0
    \quad
    \text{in }L^2(q)
  \]
  holds.
  Then, after taking a sufficiently small neighborhood of the origin in the original parameter space, we can write
  \[
    K(\pi(u,v))
    =
    u^{2\mu}v^{2\nu}\tilde K(u,v).
  \]
  Here, $\tilde K(u,v)$ is an analytic function that does not vanish on the sufficiently small coordinate neighborhood.
  Hence, $K\circ\pi$ has a normal-crossing form on this coordinate chart.
\end{lem}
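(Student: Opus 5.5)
The plan is to reduce everything to Lemma~\ref{lem_Taylor} and then to a compactness argument in the variable $v$. First apply Lemma~\ref{lem_Taylor} to the representation $f(X|\pi(u,v))=u^\mu v^\nu\tilde f(X|u,v)$. Since some $\mu_i>0$, this gives an analytic $\tilde K$ with
\[
K(\pi(u,v))=u^{2\mu}v^{2\nu}\tilde K(u,v),\qquad \tilde K(0,v)=\tfrac12\,\mathbb{E}_X\bigl[\tilde f(X|0,v)^2\bigr].
\]
This leaves only the non-vanishing of $\tilde K$ near $\{u=0\}\times C$ to prove.

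Next, the hypothesis $\tilde f(X|0,v)\not\equiv0$ in $L^2(q)$ for every $v\in C$ gives $\tilde K(0,v)>0$ for all $v\in C$. I will check that $v\mapsto \tilde K(0,v)$ is continuous. This follows from the analyticity of $\tilde K$, or equivalently from the $L^2(q)$-valued analyticity of $f$, which transfers to $\tilde f$ after the monomial is divided out. By compactness of $C$, $\tilde K(0,\cdot)$ attains a positive minimum $c>0$ on $C$. Since $\tilde K$ is continuous on a neighborhood of $\{0\}\times C$, a tube-lemma argument gives $\delta>0$ and an open set $V\supset C$ such that $\tilde K(u,v)>c/2$ whenever $|u|<\delta$ and $v\in V$.

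Finally, I translate $|u|<\delta$ back into a neighborhood of the origin in the original parameter space. On the chart $U_i$, the coordinates $u$ consist of $\theta_i$ and $\theta_{d_1+1},\ldots,\theta_d$, so all of them are small when $\theta$ is small. Restricting to points where $\theta_i$ has maximal absolute value puts $v$ in the compact set $C$ (the cube $|v_j|\le1$), so a sufficiently small ball around the origin pulls back into $\{|u|<\delta\}\times C$. On this set $\tilde K\neq0$ and $K\circ\pi=u^{2\mu}v^{2\nu}\tilde K$, which is a normal crossing. Around points of $C$ where some $v_j=0$, the monomial $v^{2\nu}$ is still a valid normal-crossing factor, because the coordinates are centered at each point as required by Theorem~\ref{resolution thm}(2). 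If needed, I re-center $v$ at each point of $C$, which only changes $v_j^{\nu_j}$ into a unit where $v_j\neq0$.

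The main obstacle is justifying that $\tilde K$ is analytic and continuous jointly in $(u,v)$ on a full neighborhood of $\{0\}\times C$, rather than merely as a formal quotient. I would handle it by noting that Lemma~\ref{lem_Taylor} produces $\tilde K$ by dividing an analytic function by a monomial whose Taylor coefficients of lower order vanish identically. This division is analytic locally around each point $(0,v_0)$, and the local quotients agree on overlaps because they agree wherever $u^{\mu}v^{\nu}\neq 0$. Hence they patch together on a neighborhood of the compact set.
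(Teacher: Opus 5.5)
Your proposal is correct and follows essentially the same route as the paper. The paper applies Lemma~\ref{lem_Taylor} to get $K\circ\pi=u^{2\mu}v^{2\nu}\tilde K$ with $\tilde K(0,v)=\frac12\mathbb{E}_X[\tilde f(X|0,v)^2]>0$ on $C$, takes the positive minimum over the compact set $C$, and uses uniform continuity in $u$ (your tube-lemma argument) to keep $\tilde K>0$ for small $u$. Your extra remarks make explicit details the paper leaves implicit: the open set $V\supset C$, pulling back a small ball via the max-coordinate charts, re-centering where $v_j\neq0$, and patching local quotients to get joint analyticity of $\tilde K$ near $\{0\}\times C$.
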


\begin{proof}
  Applying Lemma~\ref{lem_Taylor} to the statistical model
  $
    p(x|\pi(u,v))
  $
  with coordinates $(u,v)$, we obtain an analytic function $\tilde K(u,v)$ such that
  \[
    K(\pi(u,v))
    =
    u^{2\mu}v^{2\nu}\tilde K(u,v)
    ,\quad
    \tilde K(0,v)
    =
    \frac12
    \mathbb{E}_X
    \left[
      \tilde f(X|0,v)^2
    \right].
  \]
  By assumption, for every $v\in C$ we have
  $
    \tilde K(0,v)>0
  $.
  Since $C$ is compact and $\tilde K(0,v)$ is continuous in $v$, there exists
  $
    m:=\min_{v\in C}\tilde K(0,v)>0
  $.
  Also, since $\tilde K(u,v)$ is continuous in $(u,v)$ and $C$ is compact,
  \[
    \tilde K(u,v)\to\tilde K(0,v)
    \quad
    (u\to0)
  \]
  uniformly for $v\in C$.
  That is, if $u$ is sufficiently close to $0$, then
  \[
    \sup_{v\in C}
    \left|
      \tilde K(u,v)-\tilde K(0,v)
    \right|
    <
    \frac{m}{2}.
  \]
  For such $u$, for every $v\in C$,
  \[
    \tilde K(u,v)
    \geq
    \tilde K(0,v)
    -
    \left|
      \tilde K(u,v)-\tilde K(0,v)
    \right|
    >
    m-\frac{m}{2}
    =
    \frac{m}{2}
    >0.
  \]
  Hence, if $u$ is sufficiently close to $0$, then
  $
    \tilde K(u,v)\neq0
  $
  for every $v\in C$.
\end{proof}

By this lemma, on each coordinate chart we only need to verify the following two points.
First, we have to extract a monomial factor from $f$.
Second, after setting $u=0$, we have to verify that the remaining factor
$\tilde f(X|0,v)$ is not zero as a random variable for the whole range of the coordinates $v$.
Once these two points are verified, Lemma~\ref{lem_normal_crossing_chart} shows that
$K\circ\pi$ has a normal-crossing form on that coordinate chart.


\section{Illustrative example}\label{sec_ex}
  To illustrate how the quantities in the Main Theorem arise in a concrete neural-network model
  and how the proof proceeds, we consider a three-layer neural network with $N=1$ input unit, $H=4$ hidden units, and $M=1$ output unit, with activation function $\sigma=\tanh$.
  In this case, using
  $A=(\theta_1+1~a_1~a_2~a_3)\in\mathbb{R}^{1\times 4}$ and
  $B=(\theta_2+1~b_1~b_2~b_3)^\top\in\mathbb{R}^{4\times 1}$, we can write
  \begin{align*}
    Y &= A\sigma(BX) + \mathcal{N} \\
      &= (\theta_1+1)\sigma((\theta_2+1)X) + a_1\sigma(b_1X) + a_2\sigma(b_2X) + a_3\sigma(b_3X) + \mathcal{N} \\
      &=: \phi(X|\theta,a,b) + \mathcal{N}.
  \end{align*}
  Assume that the true distribution is given by
  $
    Y=\sigma(X)+\mathcal{N}.
  $
  In particular, the number of true hidden units is $H^*=1$.
  We evaluate the local learning coefficient at the realization parameter
  $
    (\theta_1,\theta_2,a_1,a_2,a_3,b_1,b_2,b_3)=0.
  $
  Let $Z:=Y-\sigma(X)$.
  Then
  \begin{equation}\label{eq_ex_pre_01}
    f
    =-Z\left\{\phi(X|\theta,a,b)-\sigma(X)\right\}
    +\frac{1}{2}\left\{\phi(X|\theta,a,b)-\sigma(X)\right\}^2.
  \end{equation}
  Moreover,
  \begin{align*}
    \phi(X|\theta=0,a,b) - \sigma(X)
    &=a_1\sigma(b_1X) + a_2\sigma(b_2X) + a_3\sigma(b_3X)\\
    &=\sum_{s=1}^\infty \frac{\sigma^{(2s-1)}(0)}{(2s-1)!}
    \left(
      a_1b_1^{2s-1} + a_2b_2^{2s-1} + a_3b_3^{2s-1}
    \right)X^{2s-1}\\
    &=\sum_{s=1}^\infty g_s(a,b)\,R_s(X),
  \end{align*}
  where
  $
    g_s(a,b):=a_1b_1^{2s-1} + a_2b_2^{2s-1} + a_3b_3^{2s-1},
    ~
    R_s(X):=\frac{\sigma^{(2s-1)}(0)}{(2s-1)!}X^{2s-1}.
  $
  Clearly, $g_s(a,b)$ is analytic and homogeneous of degree $2s-1$ in $b$.
  Let $Z_s:=-Z\,R_s(X)$.
  Then, by \eqref{eq_ex_pre_01},
  \begin{align*}
    f(X|\theta=0,a,b)
    &=\sum_{s=1}^\infty g_s(a,b)Z_s 
    + \textstyle\frac{
        g_1^2(a,b)R_1^2(X) + 2g_1(a,b)g_2(a,b)R_1(X)R_2(X)+\cdots
    }
    {2}\\
    &=\sum_{s=1}^\infty g_s(a,b)Z_s + \sum_{s=1}^\infty h_s(a,b)W_s,
  \end{align*}
  where $W_s$ are random variables and $h_s(a,b)\in$ ideal $(g_1(a,b),g_2(a,b),\ldots)^2$.

  We verify that this model satisfies assumptions (i)--(iv) of the Main Theorem with
  \[
    (r,\alpha,\beta,\gamma,m_s,n_s)
    =(2,3,3,\infty,2s-1,1),
    \qquad
    L=3,\quad n_1^\ast=n_2^\ast=n_3^\ast=1.
  \]
  Conditions (i) and (iv) have already been verified above.

  Condition (ii) is seen from
  \[
    \left.
      \frac{\partial(g_1,g_2,g_3)}{\partial(a_1,a_2,a_3)}
    \right|_{a=0}
    =
    \begin{pmatrix}
      b_1 & b_1^3 & b_1^5\\
      b_2 & b_2^3 & b_2^5\\
      b_3 & b_3^3 & b_3^5
    \end{pmatrix},
  \]
  whose determinant is
  $
    b_1b_2b_3(b_1^2-b_2^2)(b_2^2-b_3^2)(b_3^2-b_1^2).
  $
  Moreover, by the linear independence in \eqref{eq_sing01_01},
  $
    \{\sigma(X),\sigma^\prime(X)X,X,X^3,X^5\}
  $
  is linearly independent, and hence condition (iii) follows.

  Therefore, by the Main Theorem, we have 
  $
    \lambda_P \leq 11/6.
  $  
  We now verify this by following, in this concrete example, the same procedure as in the proof of the Main Theorem.
  The Taylor expansion of the log-likelihood ratio function $f$ at $(\theta,a,b)=0$ can be written, 
  by separating the terms involving $\theta$ from those not involving $\theta$, as
  \begin{align*}
    f(X|\theta,a,b)
    = f(X|\theta=0,a,b)
      + \sum_{j\geq 1}\sum_{l\geq 0}
      h_{j,l}(\theta,a,b)W^\prime_{j,l}.
  \end{align*}
  Here, $W_{j,l}^\prime$ are random variables, 
  and $h_{j,l}(\theta,a,b)$ are homogeneous polynomials of degree $j$ in $\theta$ and degree $l$ in $b$, with arbitrary dependence on $a$.
  In particular, the term corresponding to $(j,l)=(1,0)$ is
  \begin{gather*}
    \left.
      \theta_1\frac{\partial f}{\partial \theta_1}
    \right|_{(\theta,b)=0}
    +\left.
      \theta_2\frac{\partial f}{\partial \theta_2}
    \right|_{(\theta,b)=0}
    =\theta_1D_1 + \theta_2D_2,~
    D_1 := -Z\sigma(X),~ D_2 := -Z\sigma^\prime(X)X.
  \end{gather*}
  Therefore, $f$ can be written as
  \begin{align*}
    f
    =&\ \theta_1D_1 + \theta_2D_2
    + \sum_{s\geq 1} g_s(a,b)Z_s
    + \sum_{s\geq 1} h_s(a,b) W_s 
    + \sum_{\substack{j+l \geq 2 \\ j \geq 1,\, l \geq 0}}
      h_{j,l}(\theta,a,b)W^\prime_{j,l}.
  \end{align*}
  As we have already confirmed, the random variables
  $
    D_1,\ D_2,\ Z_1,\ Z_2\ \text{and } Z_3
  $
  are linearly independent.
  Thus, in a neighborhood $W$ of $(\theta,a,b)=0$, 
  we perform coordinate transformations of the parameters in four steps in order to obtain a normal crossing form of $K(\theta,a,b)$.
  Figure~\ref{fig_ex_mainthm01} displays the full branching structure.
  Below, we spell out only the branches needed to show why the coordinate change in Step~2 is required
  and how the minimum candidate value arises.
  \begin{figure}
    \centering
    \begin{tikzpicture}[
      x=0.6cm, y=1cm, 
      >=Stealth,
      nonterm/.style    ={circle,draw=black,line width=1.0pt,minimum size=5.5mm,inner sep=0pt},
      nontermBig/.style ={circle,draw=black,line width=1.2pt,minimum size=10mm, inner sep=0pt},
      term/.style       ={circle,draw=blue, line width=1.0pt,minimum size=5.5mm,inner sep=0pt},
      termBig/.style    ={circle,draw=blue, line width=1.2pt,minimum size=10mm, inner sep=0pt},
      halo/.style       ={circle,draw=red,  dashed,line width=1.1pt,minimum size=10mm, inner sep=0pt}
    ]
    
    \def\TransLabelFont{\scriptsize} 
    \tikzset{
      translabel/.style={font=\TransLabelFont, inner sep=1pt, fill=white}
    }
    
    \node[nontermBig] (S0)  at (0,0)   {};
    \node[termBig]    (T1)  at (0,-2)  {}; 
    
    \node[nontermBig] (S2)  at (3,0)   {}; 
    
    \node[nonterm]    (A0)  at (5.5,0) {}; 
    \node[halo]        at (5.5,0) {};       
    
    \node[term]       (A0d) at (5.5,-2) {};
    \node[nonterm]    (B0)  at (8,0)    {};
    \node[term]       (B0d) at (8,-2)   {};
    \node[nonterm]    (C0)  at (10.5,0) {}; 
    
    \node[term]       (C0d) at (10.5,-2) {};
    \node[nonterm]    (D0)  at (13,0)    {};
    \node[term]       (D0d) at (13,-2)   {};
    \node[nonterm]    (E0)  at (15.5,0)  {}; 
    
    \node[term]       (E0d) at (15.5,-2) {};
    \node[term]       (F0)  at (18,0)    {}; 
    
    \draw[->,line width=1.0pt] (S0) -- node[left,  translabel] {CT1}     (T1);
    \draw[->,line width=1.0pt] (S0) -- node[above, translabel] {CT2}     (S2);
    
    \draw[->,line width=1.0pt] (S2) -- (A0); 
    
    \draw[->,line width=1.0pt] (A0) -- node[left,  translabel] {CT3--CT4} (A0d);
    \draw[->,line width=1.0pt] (A0) -- node[above, translabel] {CT5}      (B0);
    \draw[->,line width=1.0pt] (B0) -- node[left,  translabel] {CT3--CT4} (B0d);
    \draw[->,line width=1.0pt] (B0) -- node[above, translabel] {CT5}      (C0);
    
    \draw[->,line width=1.0pt] (C0) -- node[left,  translabel] {CT6--CT7'} (C0d);
    \draw[->,line width=1.0pt] (C0) -- node[above, translabel] {CT8}      (D0);
    \draw[->,line width=1.0pt] (D0) -- node[left,  translabel] {CT6--CT7'} (D0d);
    \draw[->,line width=1.0pt] (D0) -- node[above, translabel] {CT8}      (E0);
    
    \draw[->,line width=1.0pt] (E0) -- node[left,  translabel] {CT9}      (E0d);
    \draw[->,line width=1.0pt] (E0) -- node[above, translabel] {CT10}     (F0);
    
    \def\yStep{1.25}
    \def\yVBot{0.70}
    \def\yVTop{1.45}
    
    \foreach \N in {S0,S2,A0,C0,E0,F0}{
      \draw[dashed,line width=0.9pt] ($(\N)+(0,\yVBot)$) -- ($(\N)+(0,\yVTop)$);
    }
    
    \draw[dotted,<->,line width=1.0pt] ($(S0)+(0,\yStep)$) -- node[above]{Step1}   ($(S2)+(0,\yStep)$);
    \draw[dotted,<->,line width=1.0pt] ($(S2)+(0,\yStep)$) -- node[above]{Step2}   ($(A0)+(0,\yStep)$);
    \draw[dotted,<->,line width=1.0pt] ($(A0)+(0,\yStep)$) -- node[above]{Step3-1} ($(C0)+(0,\yStep)$);
    \draw[dotted,<->,line width=1.0pt] ($(C0)+(0,\yStep)$) -- node[above]{Step3-2} ($(E0)+(0,\yStep)$);
    \draw[dotted,<->,line width=1.0pt] ($(E0)+(0,\yStep)$) -- node[above]{Step4}   ($(F0)+(0,\yStep)$);
    
    \end{tikzpicture}
    \caption{Procedure of the coordinate transformations $\pi$ carried out in this example}
    \caption*
    {\footnotesize The red dashed ring at the endpoint of Step2 indicates that the subsequent operations are performed only on a part (a restricted neighborhood) around that point.}
    \label{fig_ex_mainthm01}
  \end{figure}

  \subsection*{Step 1: Blow-up centered at $\{(\theta,b)=0\}$}
    We perform each of the following two coordinate transformations, CT1 and CT2, once.
    \begin{description}
      \item[CT1]
        For $1\leq t\leq 2$,
        $\{\theta_j\rightarrow \theta_t\theta_j',~
        b_1\rightarrow \theta_t b_1^{\prime},
        b_2\rightarrow \theta_t b_2^{\prime},
        b_3\rightarrow \theta_t b_3^{\prime}
        \mid 1\leq j \leq 2,~ j\neq t\}$.
      \item[CT2]
        For $1\leq t\leq 3$,
        $\{\theta_1\rightarrow b_t\theta_1^\prime,
        \theta_2\rightarrow b_t\theta_2^\prime,~
        b_k \rightarrow b_tb_k^{\prime} \mid 1\leq k \leq 3,~ k\neq t\}$.
    \end{description}
    After one application of CT1, a normal crossing is obtained
    and the corresponding candidate value is $5/2$.
    We therefore focus on the CT2 branch,
    where a normal crossing is not yet obtained.

  \paragraph*{Applying CT2 once}
    Under the coordinate transformation
    $
    \text{CT2: }\pi=\{
    \theta_1\rightarrow b_1\theta_1^\prime,~
    \theta_2\rightarrow b_1\theta_2^\prime,~
    b_2\rightarrow b_1 b_2^{\prime},~
    b_3\rightarrow b_1 b_3^{\prime}
    \},
    $
    we can write
    \begin{align}\label{eq_ex_step1}
      f
      =&\ b_1
      \Bigl\{
        \theta_1^\prime D_1+\theta_2^\prime D_2
        + \sum_{s\geq 1}b_1^{2s-2}g_s(a,1,b_2^\prime,b_3^\prime)Z_s
        + b_1\sum_{s\geq 1}
        \tilde{h}_s(a,b_1,b_2^\prime,b_3^\prime)W_s
        \notag\\
        &\quad~
        + b_1\sum_{\substack{j+l \geq 2 \\ j \geq 1,\, l \geq 0}}
          b_1^{j+l-2}
          h_{j,l}(\theta_1^\prime,\theta_2^\prime,a_1,a_2,a_3,1,b_2^\prime,b_3^\prime)
          W^\prime_{j,l}
      \Bigr\}
      \notag\\
      =&\ b_1\tilde{f}(X),
      \qquad
      \left.\tilde{f}(X)\right|_{b_1=0}
      =\theta_1^\prime D_1+\theta_2^\prime D_2+(a_1+a_2b_2^\prime+a_3b_3^\prime)Z_1.
    \end{align}
    Therefore, by Lemma~\ref{lem_Taylor},
    \begin{gather*}
      K(\theta,a,b) 
      = b_1^2\tilde{K}(\theta^\prime,a,b_1,b_2^\prime,b_3^\prime),\quad
      \tilde{K}(\theta^\prime,a,b_1=0,b_2^\prime,b_3^\prime)=\frac{1}{2}\mathbb{E}[\tilde{f}(X)^2|_{b_1=0}]
    \end{gather*}
    holds.
    Here, $\tilde{h}_s\in$ ideal
    $
    \left(
      b_1^{2s-2}g_s(a,1,b_2^\prime,b_3^\prime)\mid s\geq 1
    \right)^2.
    $
    If
    $
    (\theta_1^\prime,\theta_2^\prime,g_1(a,1,b_2^\prime,b_3^\prime))=(0,0,0),
    $
    then \(\tilde f(X)\equiv0\) in \(L^2(q)\), so the resulting form is not a normal crossing.

  \subsection*{Step 2: Coordinate transformation $a\mapsto a^{\prime}$}
    The coordinate transformation
    \[
      \begin{cases}
        a_1^\prime := g_1(a,1,b_2^\prime,b_3^\prime)
        = a_1 + a_2b_2^\prime + a_3b_3^\prime,\\
        a_2^\prime := g_2(a,1,b_2^\prime,b_3^\prime)
        = a_1 + a_2b_2^{\prime 3} + a_3b_3^{\prime 3},\\
        a_3^\prime := g_3(a,1,b_2^\prime,b_3^\prime)
        = a_1 + a_2b_2^{\prime 5} + a_3b_3^{\prime 5}
      \end{cases}
    \]
    maps $a=0$ to $a^\prime=0$ and is analytically invertible on the nonempty open set
    \[
      \left\{
        (b_2^\prime,b_3^\prime)\ \middle|\
        b_2^{\prime}b_3^{\prime}(b_2^{\prime 2}-1)(b_3^{\prime 2}-1)(b_2^{\prime 2}-b_3^{\prime 2})\neq0
      \right\}.
    \]
    In what follows, we restrict attention to this open set and compute the real log canonical threshold there.
    Since we restrict to a subset of a neighborhood, the resulting real log canonical threshold gives an upper bound for the exact value.
    
    Since
    \[
    \begin{pmatrix}
      a_1 \\ a_2 \\ a_3
    \end{pmatrix}
    =
    \begin{pmatrix}
      1 & b_2^\prime & b_3^\prime\\
      1 & b_2^{\prime 3} & b_3^{\prime 3}\\
      1 & b_2^{\prime 5} & b_3^{\prime 5}
    \end{pmatrix}^{-1}
    \begin{pmatrix}
      a_1^\prime \\ a_2^\prime \\ a_3^\prime
    \end{pmatrix},
    \]
    we have
    $
      g_s(a,1,b_2^\prime,b_3^\prime)
      \in
      \text{ideal}\left(a_1^\prime,a_2^\prime,a_3^\prime\right)
      ~\text{for all }s\geq 4.
    $
    Moreover,
    \begin{align*}
      \text{ideal}
      \left(
        b_1^{2s-2}g_s(a,1,b_2^\prime,b_3^\prime)\mid s\geq 1
      \right)
      &=
      \text{ideal}
      \left(
        a_1^\prime,
        b_1^2a_2^{\prime},
        b_1^4a_3^{\prime},
        b_1^6g_4(a,1,b_2^\prime,b_3^\prime),
        \ldots
      \right)\\
      &=
      \text{ideal}
      \left(
        a_1^\prime,
        b_1^2a_2^{\prime},
        b_1^4a_3^{\prime}
      \right).
    \end{align*}
    Hence, \eqref{eq_ex_step1} can be rewritten as
    \begin{equation}\label{eq_ex_step2}
    \begin{aligned}
      f =&\, b_1
      \Bigl\{
        \theta_1^\prime D_1+\theta_2^\prime D_2
        + a_1^\prime Z_1
        + b_1^2a_2^{\prime}Z_2
        + b_1^4a_3^{\prime}Z_3
        + b_1^6\sum_{s\geq 4}b_1^{2s-8}
        k_sZ_s\\
        &\quad~+ b_1\sum_{s\geq 1}
        \tilde{h}_sW_s
        + b_1\sum_{\substack{j+l \geq 2 \\ j \geq 1, \, l \geq 0}}
          b_1^{j+l-2}
          h_{j,l}(\theta_1^\prime,\theta_2^\prime)
          W^\prime_{j,l}
      \Bigr\}.
    \end{aligned}
    \end{equation}
    Here,
    $
      k_s \in \text{ideal}\left(a_1^\prime,a_2^\prime,a_3^\prime\right),
    $
    and
    $
      \tilde{h}_s \in \text{ideal}
      \left(
        a_1^\prime,
        b_1^2a_2^{\prime},
        b_1^4a_3^{\prime}
      \right)^2.
    $
    For simplicity, we continue to denote by $\tilde{h}_s$ and $h_{j,l}$ the functions after this coordinate transformation.
    Although $h_{j,l}$ are still functions of
    $
    \{a_1^\prime,a_2^\prime,a_3^\prime, b_2^\prime, b_3^\prime\},
    $
    we omit this dependence for clarity.
    We also note that, after the coordinate transformation, $h_{j,l}$ remains homogeneous of degree $j$ in $\theta^\prime$, but is not necessarily homogeneous of degree $l$ in $b^\prime$.

\subsection*{Step 3: Blow-up}

    We apply the following three coordinate transformations, CT3--CT5, according to the procedure shown in Figure~\ref{fig_ex_mainthm01}.
    \begin{description}
      \item[CT3]
        For $1\leq t\leq 2$,
        $\{\theta_j^\prime \rightarrow \theta_t^\prime \theta_j^{\prime\prime},~
        a_1^\prime \rightarrow \theta_t^\prime a_1^{\prime\prime},~
        b_1 \rightarrow \theta_t^\prime b_1^\prime
        \mid 1\leq j\leq 2,~ j\neq t\}$.
      \item[CT4]
        $\{\theta_1^\prime \rightarrow a_1^\prime \theta_1^{\prime\prime},~
        \theta_2^\prime \rightarrow a_1^\prime \theta_2^{\prime\prime},~
        b_1 \rightarrow a_1^\prime b_1^\prime\}$.
      \item[CT5]
        $\{\theta_1^\prime \rightarrow b_1\theta_1^{\prime\prime},~
        \theta_2^\prime \rightarrow b_1\theta_2^{\prime\prime},~
        a_1^\prime \rightarrow b_1 a_1^{\prime\prime}\}$.
    \end{description}
   The charts corresponding to CT3 and CT4 already yield normal crossings,
    with candidate value $8/4$.
    The same holds for the charts obtained by applying CT5 once and then CT3 or CT4.
    We therefore describe in detail only the case in which CT5 is applied twice,
    which yields the smallest candidate value in this example.   

    \paragraph*{Applying CT5 twice}
    Under the coordinate transformation
    $
    \pi=\{\theta_1^\prime \rightarrow b_1^2\theta_1^{\prime\prime},~
    \theta_2^\prime \rightarrow b_1^2\theta_2^{\prime\prime},~
    a_1^\prime \rightarrow b_1^2 a_1^{\prime\prime}\},
    $
    \eqref{eq_ex_step2} can be rewritten as
    \begin{equation}\label{eq_ex_step3-1}
    \begin{aligned}
      f
      =&\, b_1^3
      \Biggl\{
        \theta_1^{\prime\prime} D_1+\theta_2^{\prime\prime} D_2
        + a_1^{\prime\prime} Z_1
        + a_2^\prime Z_2
        + b_1^2 a_3^\prime Z_3
        + b_1^4\sum_{s\geq 4} b_1^{2s-8} k_s^{(2)} Z_s\\
        &\quad
        + b_1^3\sum_{s\geq 1} \tilde{h}_s^{(2)} W_s
        + b_1\sum_{\substack{j+l \geq 2 \\ j \geq 1,\, l \geq 0}}
          b_1^{(j+l-2)+2(j-1)}
          h_{j,l}^{(2)}(\theta_1^{\prime\prime},\theta_2^{\prime\prime})
          W_{j,l}^\prime
      \Biggr\}\\
      =&\, b_1^3\tilde{f}(X),
      \qquad
      \left.\tilde{f}\right|_{b_1=0}
      =\theta_1^{\prime\prime} D_1+\theta_2^{\prime\prime} D_2
      + a_1^{\prime\prime} Z_1
      + a_2^\prime Z_2.
    \end{aligned}
    \end{equation}
    Here,
    $
      k_s^{(2)} \in \text{ideal}\left(b_1^2 a_1^{\prime\prime},a_2^\prime,a_3^\prime\right),
    $
    and
    $
      \tilde{h}_s^{(2)} \in \text{ideal}\left(a_1^{\prime\prime},a_2^\prime,b_1^2 a_3^\prime\right)^2.
    $
    A normal crossing is obtained at any point $Q$ satisfying
    $
    (\theta_1^{\prime\prime},\theta_2^{\prime\prime},a_1^{\prime\prime},a_2^\prime,b_1)\neq 0.
    $
    Since the Jacobian of this coordinate transformation is $b_1^{10}$, we obtain
    \[
      \inf_Q \min_j \frac{h_j^{(Q)}+1}{k_j^{(Q)}}
      =\frac{11}{6}.
    \]
    Hence, it remains to consider only points satisfying
    $
    (\theta_1^{\prime\prime},\theta_2^{\prime\prime},a_1^{\prime\prime},a_2^\prime,b_1)=0.
    $
    

  The remaining cases are treated in the same manner.
  Their candidate values are summarized below.

  \begin{center}
  \small
  \begin{tabular}{@{}ccccc@{}}
    \toprule
    Stage & Step 1 & Step 3-1 & Step 3-2 & Step 4 \\
    \midrule
    Candidate values
      & $5/2$
      & $8/4,\ 11/6$
      & $15/8,\ 19/10$
      & $5/2$ \\
    \bottomrule
  \end{tabular}
  \end{center}

  Therefore,
  \[
    \lambda_P
    \leq \min
    \left\{
      \frac{5}{2},\frac{8}{4},\frac{11}{6},\frac{15}{8},\frac{19}{10},\frac{5}{2}
    \right\}
    =\frac{11}{6}.
  \]

\section{Detailed Proof of the Main Theorem}\label{sec_DetailPf_mainthm}
  We denote
  $$
  D_i(a):=
  \left.
    \frac{\partial f}{\partial \theta_i}
  \right|_{(\theta,b)=0},\quad
  X_i:=
  D_i(a=0)
  \quad(i=1,\ldots,r).
  $$
  Fixing $a$ in a neighborhood of $0$, the Taylor expansion of $f$ around $(\theta,b)=0$ can be written as
  \begin{align*}
    f &= 
    f(X|\theta=0,a,b) 
    + \sum_{j\geq1}\sum_{l\geq0}h_{j,l}(\theta,a,b)W_{j,l}^\prime\\
    &=
    \sum_{i=1}^r \theta_iD_i(a) 
    + \sum_{s=1}^\gamma \sum_{n=1}^{n_s} g_{s,n}(a,b)Z_{s,n} + \sum_{s=1}^\infty h_s(a,b)W_s 
    + \sum_{\substack{j+l\geq2\\j\geq1,l\geq0}}h_{j,l}W_{j,l}^\prime.
  \end{align*}
  Here $W_{j,l}^\prime$ are random variables, and $h_{j,l}(\theta,a,b)$ are homogeneous polynomials of degree $j$ in $\theta$ and of degree $l$ in $b$ (with arbitrary degree in $a$).
  By assumption~(iii), $X_1,\ldots,X_r,Z_{1,1},\ldots,Z_{L,n_L}$ are linearly independent.

  \subsection*{Step 1: Blow-up centered at $\{(\theta,b)=0\}$}
    We perform the following four types of coordinate transformations, CT1--CT4,
    \footnote{In the terminology of algebraic geometry, CT1 and CT2 are blow-ups centered at the subvariety $\{(\theta,a,b)\mid \theta=b=0\}$, 
    whereas CT3 and CT4 are blow-ups centered at the subvariety $\{(\theta,a,b)\mid \theta=b_t=0\}$ when the exceptional divisor arising from CT2 is given by $\{b_t=0\}$.}
    according to the procedure in Figure~\ref{fig3}.
    (If $r=0$, only CT2 is used; we do not mention this case explicitly below.)
    \begin{description}
      \item[CT1]
        For $1\leq t\leq r$,
        $\{\theta_j\rightarrow \theta_t\theta_j'~(1\leq j \leq r,\ j\neq t),~
        b_k\rightarrow \theta_t b_k^{\prime}~(1\leq k\leq \beta)\}$.
      \item[CT2] 
        For $1\leq t\leq \beta$,
        $\{\theta_j\rightarrow b_t\theta_j'~(1\leq j \leq r),~
        b_k \rightarrow b_tb_k^{\prime}~(1\leq k \leq \beta,\ k\neq t)\}$.
      \item[CT3] 
        For the index $t$ chosen in CT2, and for $1\leq s\leq r$,
        $\{\theta_j^{\prime}\rightarrow \theta_s\theta_j^{\prime\prime}~(1\leq j \leq r,\ j\neq s),~
        b_t^{\prime}\rightarrow \theta_s b_t^{\prime\prime}\}$.
      \item[CT4] 
        For the index $t$ chosen in CT2,
        $\{\theta_j^{\prime}\rightarrow b_t\theta_j^{\prime\prime}~(1\leq j\leq r)\}$.
    \end{description}

    We first apply CT1 and CT2.
    In the former case, normal crossings are obtained, whereas in the latter case they are not, and we subsequently apply CT3 and CT4.
    Again, in the former case normal crossings are obtained, whereas in the latter case they are not.
    Repeating this process $m_1-1$ more times (so that it is performed $m_1$ times in total), we obtain normal crossings after applying CT3.

    \subsubsection*{Performing CT1 once}
      We may assume without loss of generality that $t=1$.
      If we perform
      CT1:$\pi=\{\theta_j\rightarrow \theta_1\theta_j'~(2\leq j \leq r),~
      b_k\rightarrow \theta_1 b_k^{\prime}~(1\leq k\leq \beta)\}$,
      then we can write
      \begin{align*}
        f =& \theta_1
        \left\{
        D_1(a)+\sum_{i=2}^r\theta_i^{\prime}D_i(a) + \theta_1^{m_1-1}\sum_{n=1}^{n_1}g^\prime_{1,n}(a,b^{\prime})Z_{1,n} + \theta_1\times(\cdots)
        \right\}\\
         =& \theta_1\tilde{f}(X)   
         ~;~
         \left.\tilde{f}(X)\right|_{(\theta_1,a)=0}
         =X_1+ \sum_{i=2}^r \theta_i^{\prime}X_i + \sum_{n=1}^{n_1}g^\prime_{1,n}(0,b^{\prime})Z_{1,n}\times\mathbf{1}_{m_1=1}
         \not\equiv 0
        \quad\text{in }L^2(q).
      \end{align*}
      Thus, applying Lemma~\ref{lem_normal_crossing_chart} with
      $
      u=(\theta_1,a)\text{ and }v=(\theta_2^\prime,\ldots,\theta_r^\prime,b^\prime),
      $
      we see that a normal crossing form has been obtained.
      We omit this verification in the remaining cases.
      Since the Jacobian of this coordinate transformation is $\theta_1^{r+\beta-1}$, we obtain
        \[
        \inf_{Q}\min_{j}{\frac{h_j^{(Q)}+1}{k_j^{(Q)}}}=\frac{r+\beta}{2}.
        \]
        
    \subsubsection*{After performing CT2 once, performing CT4 $k(\leq m_1-2)$ times, and finally performing CT3}
      By the same argument as in Kurumadani~\cite{kurumadani2025b} and above, we see that normal crossings are obtained, and we obtain the following.
      \[
      \inf_{Q}\min_{j}{\frac{h_j^{(Q)}+1}{k_j^{(Q)}}}=\frac{r}{2}+\frac{\beta}{2(k+2)}.
      \]

    \subsubsection*{After performing CT2 once and then performing CT4 $m_1-1$ times}
      Take an arbitrary point $\hat b=(\hat b_1,\ldots,\hat b_\beta)\neq0$ satisfying the rank condition in Condition~(ii).
      After relabeling the indices, we may assume that $\hat b_1\neq0$.
      We apply the coordinate transformation
      $\pi=\{\theta_j\rightarrow b_1^{m_1}\theta_j^\prime~(1\leq j \leq r),~
      b_k \rightarrow b_1b_k^{\prime}~(2\leq k \leq \beta)\}$
      (whose Jacobian is $b_1^{m_1r+\beta-1}$).
      We write $b_2^{\prime},\ldots,b_\beta^\prime$ simply as $\tau$, and set
      $
        T:=(\hat b_2/\hat b_1,\ldots,\hat b_\beta/\hat b_1)
      $.
      In what follows, we work on the part where the transformed coordinate $\tau$ is sufficiently close to $T$.
      Using $g_{s,n}=b_1^{m_s}\tilde{g}_{s,n}(a,b_1,\tau)$,
      where $\tilde{g}_{s,n}$ is analytic, 
      we can write
      \begin{equation}\label{eq_pf_mainthm_step1}
      \begin{aligned}
        f =& b_1^{m_1}
        \Biggl\{
          \sum_{i=1}^r \theta_i^{\prime} D_i(a)
          + \sum_{s=1}^\gamma\sum_{n=1}^{n_s} b_1^{m_s-m_1}\tilde{g}_{s,n}(a,b_1,\tau)Z_{s,n} \\
          &\quad\quad
          + b_1^{m_1} \sum_{s=1}^\infty \tilde{h}_s(a,b_1,\tau) W_s
          + \sum_{\substack{j+l\geq 2\\ j\geq1, l\geq 0}}b_1^{m_1(j-1)+l}h_{j,l}(\theta^{\prime})W_{j,l}^\prime
        \Biggr\}.
      \end{aligned}
      \end{equation}
      Here, we have $\tilde{h}_s(a,b_1,\tau)\in$ 
      ideal
      $
      \left(
        b_1^{m_s-m_1}\tilde{g}_{s,n}(a,b_1,\tau)
        \mid
        1\leq s\leq \gamma,~ 1\leq n\leq n_s
      \right)^2
      $,
      and $h_{j,l}(\theta^{\prime})$ is a homogeneous polynomial of degree $j$ in $\theta^{\prime}$.
      In this coordinate neighborhood alone, normal crossings have not yet been obtained.

  \subsection*{Step 2: Coordinate transformation $a\mapsto a^{\prime}$}
    First, we consider the case where
    $\sum_{s=1}^\gamma n_s\geq \alpha$. In this case,
    note that $\sum_{s=1}^L n_s^\ast = \alpha$ holds.

    For the $T$ fixed in Step~1, the rank condition in Condition~(ii) holds at
    $(a,b_1,\tau)=(0,1,T)$.
    We define the coordinate transformation $(a,b_1,\tau)\mapsto (a^\prime, b_1,\tau)$ by
    $a_{s,n}^\prime:=\tilde{g}_{s,n}(a,b_1,\tau)$,
    $(1\leq s\leq L, 1\leq n\leq n_s^*)$.
    By condition (i), we have $\bar{g}_{s,n}(0,b_1,\tau)=0$,
    and hence this coordinate transformation satisfies $(0,0,T)\mapsto(0,0,T)$.
    Furthermore, since $\tilde{g}_{s,n}(a,0,\tau)=\bar{g}_{s,n}(a,1,\tau)$, we have
    $
    \left.\frac{\partial\tilde{g}_{s,n}}{\partial a}\right|_{(a,b_1,\tau)=(0,0,T)}
    =\left.\frac{\partial\bar{g}_{s,n}}{\partial a}\right|_{(a,b_1,\tau)=(0,1,T)}
    $,
    and this is a nonsingular matrix by condition (ii).
    Therefore,
    since the Jacobian of this coordinate transformation at the point $(a,b_1,\tau)=(0,0,T)$ is a nonsingular matrix,
    the inverse function theorem implies that this coordinate transformation is locally an analytic isomorphism
    near the chosen point.
    In what follows, we work on the coordinate neighborhood corresponding to this $T$.
    The normal crossing obtained on this chart gives the candidate value
    used in the upper-bound calculation for $\lambda_P$.
    Hereafter, using an analytic map $\varphi$, we write
    $a = \varphi(a^\prime,b_1,\tau)$.
    Note that this satisfies $\varphi(a^\prime=0,b_1,\tau)=0$.
 
    Using the index set
    $
    \mathcal{S}:=\{(s,n)\mid s=L, n_L^*+1\leq n\leq n_L\}
    \cup
    \{(s,n)\mid L+1\leq s\leq \gamma, 1\leq n\leq n_s\}
    $,
    we can write (\ref{eq_pf_mainthm_step1}) after the coordinate transformation as
    \begin{equation} \label{eq_pf_mainThm_01}
    \begin{aligned}
      f =& b_1^{m_1}
      \Biggl\{
        \sum_{i=1}^r \theta_i^{\prime}D_i(\varphi(a^\prime,b_1,\tau)) 
        + \sum_{s=1}^L 
        \sum_{n=1}^{n_s^\ast}
        b_1^{m_s-m_1}
        a_{s,n}^{\prime}Z_{s,n}
        + b_1^{m_{L}-m_1}\sum_{(s,n)\in\mathcal{S}}k_{s,n}Z_{s,n}
        \\
      &\quad\quad
      + b_1^{m_1}\sum_{s=1}^\infty \tilde{h}_sW_s
      + \sum_{\substack{j+l\geq 2\\j\geq 1,\, l\geq 0}}
        b_1^{m_1(j-1)+l}h_{j,l}(\theta^\prime)W_{j,l}^\prime
      \Biggr\}.
    \end{aligned}
    \end{equation}
    Here,
    since
    $
    \left.\tilde{g}_{s,n}(\varphi(a^\prime,b_1,\tau),b_1,\tau)\right|_{a^\prime=0}
    =0
    $, we obtain
    \begin{equation*}
      (s,n)\in\mathcal{S},~
      \tilde{g}_{s,n}(\varphi(a^\prime,b_1,\tau),b_1,\tau)
      \in\text{ideal }(a_{1,1}^\prime,\ldots,a_{L,n_L^\ast}^\prime),  
    \end{equation*}
    and
    $k_{s,n}\in$ ideal $(a_{s,n}^\prime\mid 1\leq s\leq L, 1\leq n\leq n_s^*)$,
    $\tilde{h}_s\in$ ideal
    $(b_1^{m_s-m_1}a_{s,n}^\prime\mid 1\leq s\leq L, 1\leq n\leq n_s^\ast)^2$.
    Hereafter, unless there is no risk of confusion, we abbreviate $D_i(\varphi(a^\prime,b_1,\tau))$ as $D_i(a^\prime)$.
    Note that, with this notation, $D_i(a^\prime=0)=X_i$ holds.

  When $\sum_{s=1}^\gamma n_s< \alpha$, we have $n_s^*=n_s$.
  In this case, we choose $\sum_{s=1}^\gamma n_s$ of the parameters $a$ so that, by assumption~(ii), the Jacobian matrix
  $
    \left.
    \frac{\partial (\bar{g}_{1,1},\ldots,\bar{g}_{L,n_L})}{\partial (a_{i_1},\ldots,a_{i_{\sum n_s}})}
    \right|_{a=0}
  $
  is nonsingular, and then apply the same coordinate transformation as above.
  In this way, we obtain (\ref{eq_pf_mainThm_01}).

  \subsection*{Step 3: Blow-up using $a^{\prime}$}
    Using (\ref{eq_pf_mainThm_01}), we compute the candidate value at the point
    $
      (\theta^{\prime},a^{\prime},b_1,\tau)=(0,0,0,T),
    $
    where $T$ is arbitrary subject to the rank condition above.

    \subsubsection*{Step 3-1: First coordinate transformation}
    We perform the following three types of coordinate transformations, CT5--CT7,
    \footnote{In the terminology of algebraic geometry, these coordinate transformations correspond to the blow-up centered at the subvariety
    $\{(\theta^{\prime},a^{\prime},b_1,\tau)\mid \theta^{\prime}=a_{1,\cdot}^{\prime}=b_1=0\}$.}
    according to the procedure in Figure~\ref{fig3}.
    \begin{description}
      \item[CT5]
        For $1\leq t\leq r$,
        $\{\theta_j^{\prime}\rightarrow \theta_t^{\prime}\theta_j^{\prime\prime}~(1\leq j \leq r,\ j\neq t),~
        a_{1,n}^{\prime}\rightarrow\theta_t^{\prime}a_{1,n}^{\prime\prime}~(1\leq n\leq n_1^\ast),~
        b_1\rightarrow \theta_t^{\prime} b_1^{\prime}\}$.
      \item[CT6]
        For $1\leq t\leq n_1^\ast$,
        $\{\theta_j^{\prime}\rightarrow a_{1,t}^{\prime}\theta_j^{\prime\prime}~(1\leq j \leq r),~
        a_{1,n}^{\prime}\rightarrow a_{1,t}^{\prime}a_{1,n}^{\prime\prime}~(1\leq n\leq n_1^\ast,\ n\neq t),~
        b_1\rightarrow a_{1,t}^{\prime} b_1^{\prime}\}$.
      \item[CT7]
        $\{\theta_j^{\prime}\rightarrow b_1\theta_j^{\prime\prime}~(1\leq j \leq r),~
        a_{1,n}^\prime\rightarrow b_1 a_{1,n}^{\prime\prime}~(1\leq n\leq n_1^\ast)\}$.
    \end{description}
    
    \subsubsection*{Apply CT5 once}
    Applying CT5,
    $
      \pi=
      \{\theta_j^{\prime}\rightarrow \theta_1^{\prime}\theta_j^{\prime\prime}~(2\leq j \leq r),~
      a_{1,n}^{\prime}\rightarrow \theta_1^\prime a_{1,n}^{\prime\prime}~(1\leq n \leq n_1^\ast),~
      b_1 \rightarrow \theta_1^\prime b_1^\prime\},
    $
    we obtain
    \begin{align*}
      f =& \theta_1^{\prime m_1+1}b_1^{\prime m_1}
      \Biggl\{
      D_1(a^\prime)
      +\sum_{i=2}^r\theta_i^{\prime\prime}D_i(a^\prime)
      + \sum_{n=1}^{n_1^\ast} a_{1,n}^{\prime\prime}Z_{1,n} \\
      &\quad\quad\quad\quad\quad
      +\sum_{s=2}^{L}\theta_1^{\prime m_s-m_1-1}b_1^{\prime m_s-m_1}\sum_{n=1}^{n_s^\ast}a_{s,n}^{\prime}Z_{s,n}
      + \theta_1^{\prime}\times(\cdots)
      \Biggr\}\\
       =& \theta_1^{\prime m_1+1}b_1^{\prime m_1}\tilde{f}(X)
       ~;\quad
       \left.\tilde{f}(X)\right|_{(\theta_1^{\prime},a_{2,\cdot}^{\prime},\ldots ,a_{L,\cdot}^{\prime})=0}
        =X_1+ \sum_{i=2}^r \theta_i^{\prime\prime}X_i+\sum_{n=1}^{n_1^\ast} a_{1,n}^{\prime\prime}Z_{1,n}
         \not\equiv 0
         \quad\text{in }L^2(q).
    \end{align*}
    Hence we obtain normal crossings.
    Since the Jacobian of this coordinate transformation is
    $\theta_1^{\prime (m_1+1)r+\beta+n_1^\ast-1}b_1^{\prime m_1r+\beta-1}$,
    we obtain
    \[
      \inf_{Q}\min_{j}{\frac{h_j^{(Q)}+1}{k_j^{(Q)}}}
      =\min\left\{\frac{(m_1+1)r+\beta+n_1^\ast}{2(m_1+1)},\frac{m_1r+\beta}{2m_1}\right\}.
    \]
    The same conclusion holds when CT6 is applied once.
    
    \subsubsection*{Apply CT7 $k$ times $(k\leq m_2-m_1-1)$, and then apply CT5 or CT6}
    First, after applying CT7 successively $k$ times, we may write
    $
      \pi=\{\theta_j^{\prime}\rightarrow b_1^k\theta_j^{\prime\prime}~(1\leq j \leq r),~
      a_{1,n}^{\prime} \rightarrow b_1^{k}a_{1,n}^{\prime\prime}~(1\leq n\leq n_1^\ast)\},
    $
    and obtain
    \begin{align*}
      f =& b_1^{m_1+k}
      \left\{
        \sum_{i=1}^r \theta_i^{\prime\prime} D_i(a^\prime)
        + \sum_{n=1}^{n_1^\ast} a_{1,n}^{\prime\prime}Z_{1,n}
        + \sum_{s=2}^L b_1^{m_s-m_1-k}\sum_{n=1}^{n_s^\ast}a_{s,n}^{\prime} Z_{s,n}
        + b_1\times(\cdots)
      \right\}\\
      =& b_1^{m_1+k}\tilde{f}(X)
        ~; \quad
        \left.\tilde{f}(X)\right|_{(b_1,a_{2,\cdot}^\prime,\ldots,a_{L,\cdot}^\prime)=0}
        =\sum_{i=1}^r \theta_i^{\prime\prime}X_i
        + \sum_{n=1}^{n_1^\ast} a_{1,n}^{\prime\prime}Z_{1,n}.
    \end{align*}
    By assumption~(iii), normal crossings are obtained at any point $Q$ satisfying
    $
      (\theta^{\prime\prime}_1,\ldots,\theta^{\prime\prime}_{r},a_{1,1}^{\prime\prime},\ldots,a_{1,n_1^\ast}^{\prime\prime})\neq 0.
    $
    Since the Jacobian of this coordinate transformation is
    $b_1^{(m_1+k)r+\beta+k n_1^\ast-1}$, we obtain
    \[
      \inf_{Q}\min_{j}{\frac{h_j^{(Q)}+1}{k_j^{(Q)}}}
      =\frac{(m_1+k)r+\beta+k n_1^\ast}{2(m_1+k)}.
    \]
    Therefore, it remains to consider only points satisfying
    $
      (\theta^{\prime\prime}_1,\ldots,\theta^{\prime\prime}_{r},a_{1,1}^{\prime\prime},\ldots,a_{1,n_1^\ast}^{\prime\prime})= 0.
    $
    We now denote the transformed coordinates
    $\theta_1^{\prime\prime},\ldots,\theta_r^{\prime\prime},a_{1,1}^{\prime\prime},\ldots,a_{1,n_1^\ast}^{\prime\prime}$
    again by
    $\theta_1^{\prime},\ldots,\theta_r^{\prime},a_{1,1}^{\prime},\ldots,a_{1,n_1^\ast}^{\prime}$.
    Applying CT5,
    $
      \{\theta_j^{\prime}\rightarrow \theta_1^{\prime}\theta_j^{\prime\prime}~(2\leq j \leq r),~
      a_{1,n}^{\prime}\rightarrow \theta_1^{\prime}a_{1,n}^{\prime\prime}~(1\leq n \leq n_1^\ast),~
      b_1\rightarrow \theta_1^{\prime}b_1^{\prime}\},
    $
    we obtain
    \begin{align*}
      f =& \theta_1^{\prime m_1+k+1}b_1^{\prime m_1+k}
      \Biggl\{
        D_1(a^\prime)
        + \sum_{i=2}^r \theta_i^{\prime\prime}D_i(a^\prime)
        + \sum_{n=1}^{n_1^\ast} a_{1,n}^{\prime\prime}Z_{1,n}\\
        & \quad\quad\quad\quad\quad\quad\quad\quad
        + \sum_{s=2}^L \theta_1^{\prime m_s-m_1-k-1}b_1^{\prime m_s-m_1-k}\sum_{n=1}^{n_s^\ast}a_{s,n}^\prime Z_{s,n}
        + \theta_1^{\prime}\times(\cdots)
      \Biggr\}\\
      =& \theta_1^{\prime m_1+k+1}b_1^{\prime m_1+k}\tilde{f}(X)
      ~;\\
      \left.\tilde{f}(X)\right|&_{(\theta_1^{\prime},a_{2,\cdot}^{\prime},\ldots,a_{L,\cdot}^{\prime})=0}
      = X_1 + \sum_{i=2}^r \theta_i^{\prime\prime}X_i
      + \sum_{n=1}^{n_1^\ast} a_{1,n}^{\prime\prime}Z_{1,n}
      \not\equiv0
      \quad\text{in }L^2(q).
    \end{align*}
    Hence, normal crossings are obtained.
    Since the Jacobian of this coordinate transformation is
    $\theta_1^{\prime (m_1+k+1)r+\beta+(k+1)n_1^\ast-1}b_1^{\prime (m_1+k)r+\beta+k n_1^\ast-1}$,
    we obtain
    \[
      \inf_{Q}
      \min_{j}{\frac{h_j^{(Q)}+1}{k_j^{(Q)}}}
      =\min\left\{
        \frac{(m_1+k+1)r+\beta+(k+1)n_1^\ast}{2(m_1+k+1)},
        \frac{(m_1+k)r+\beta+k n_1^\ast}{2(m_1+k)}
      \right\}.
    \]
    As a function of $k$, the quantity
    $
      \frac{(m_1+k)r+\beta+k n_1^\ast}{2(m_1+k)}
    $
    is either monotone increasing or monotone decreasing, and hence its minimum is attained at either $k=0$ or $k=m_2-m_1-1$.
    The necessary and sufficient condition for these two values to coincide is $\beta=m_1 n_1^\ast$.
    
    The above argument was given for CT5, but the same conclusion holds when CT6 is applied instead.
    
    \subsubsection*{Apply CT7 $m_2-m_1$ times}
    Proceeding as above, we apply the coordinate transformation
    $
      \pi=\{\theta_j^{\prime}\rightarrow b_1^{m_2-m_1}\theta_j^{\prime\prime}~(1\leq j \leq r),~
      a_{1,n}^{\prime} \rightarrow b_1^{m_2-m_1}a_{1,n}^{\prime\prime}~(1\leq n\leq n_1^\ast)\},
    $
    and obtain
    \begin{equation}\label{eq_pf_mainthm_step3-1}
      \begin{aligned}
        f =& b_1^{m_2}
        \Biggl\{
          \sum_{i=1}^r \theta_i^{\prime\prime} D_i(a^\prime)
          + \sum_{n=1}^{n_1^\ast} a_{1,n}^{\prime\prime}Z_{1,n}
          + \sum_{n=1}^{n_2^\ast} a_{2,n}^{\prime}Z_{2,n}
          + \sum_{s=3}^L b_1^{m_s-m_2}\sum_{n=1}^{n_s^\ast}a_{s,n}^\prime Z_{s,n}
          \\
        &\quad\quad
          + b_1^{m_L-m_2}\sum_{(s,n)\in\mathcal{S}} k_{s,n}^{(2)}Z_{s,n}
          + b_1^{m_2} \sum_{s=1}^\infty \tilde{h}^{(2)}_s W_s
          + \sum_{\substack{j+l\geq 2\\j\geq1,l\geq 0}} b_1^{m_2(j-1)+l} h_{j,l}^{(2)}(\theta^{\prime\prime})W_{j,l}^\prime
        \Biggr\}\\
        =& b_1^{m_2}\tilde{f}(X)
        ~;\quad
        \left.\tilde{f}(X)\right|_{(b_1,a_{2,\cdot}^{\prime},\ldots,a_{L,\cdot}^{\prime})=0}
        = \sum_{i=1}^r \theta_i^{\prime\prime}X_i
        + \sum_{n=1}^{n_1^\ast} a_{1,n}^{\prime\prime}Z_{1,n}.
      \end{aligned}
    \end{equation}
    In this coordinate neighborhood, normal crossings are obtained at any point $Q$ satisfying
    $
      (\theta_1^{\prime\prime},\ldots,\theta_r^{\prime\prime},a_{1,1}^{\prime\prime},\ldots,a_{1,n_1^\ast}^{\prime\prime})\neq0.
    $
    Since the Jacobian of this coordinate transformation is
    $b_1^{m_2r+\beta+(m_2-m_1)n_1^\ast-1}$, we obtain
    \[
      \inf_{Q}
      \min_{j}{\frac{h_j^{(Q)}+1}{k_j^{(Q)}}}
      =\frac{m_2r+\beta+(m_2-m_1)n_1^\ast}{2m_2}.
    \]
    Therefore, it remains to consider only points satisfying
    $
      (\theta_1^{\prime\prime},\ldots,\theta_r^{\prime\prime},a_{1,1}^{\prime\prime},\ldots,a_{1,n_1^\ast}^{\prime\prime})=0.
    $
    This case will be considered in the next step.
    
    Summarizing, among the normal crossings obtained in Step~3-1, we have
    \[
      \inf_{Q}
      \min_{j}{\frac{h_j^{(Q)}+1}{k_j^{(Q)}}}
      =\min\left\{
      \frac{m_1r+\beta}{2m_1},
      \frac{m_2r+\beta+(m_2-m_1)n_1^\ast}{2m_2}
      \right\}.
    \]
    We continue to denote the transformed coordinates
    $
      \theta_1^{\prime\prime},\ldots,\theta_{r}^{\prime\prime},~
      a_{1,1}^{\prime\prime},\ldots,a_{1,n_1^\ast}^{\prime\prime}
    $
    by
    $
      \theta_1^{\prime},\ldots,\theta_{r}^{\prime},~
      a_{1,1}^{\prime},\ldots,a_{1,n_1^\ast}^{\prime}.
    $

  \subsubsection*{Step 3-$k$: Coordinate transformations for $k=2,3,\ldots,L-1$}
    For each $k=2,3,\ldots,L-1$, we perform the following three types of coordinate transformations, CT5$(k)$--CT7$(k)$,
    \footnote{In the terminology of algebraic geometry, these coordinate transformations are described as the blow-up centered at the subvariety
    $\{(\theta^{\prime},a^{\prime},b_1,\tau)\mid \theta^{\prime}=a_{1,\cdot}^{\prime}=\cdots=a_{k,\cdot}^{\prime}=b_1=0\}$.}
    according to the procedure in Figure~\ref{fig3}.
    \begin{description}
      \item[CT5$(k)$]
        For $1\leq t\leq r$,
        $\theta_j^{\prime}\rightarrow \theta_t^{\prime}\theta_j^{\prime\prime}$ $(1\leq j \leq r,\ j\neq t)$,
        $a_{s,n}^{\prime}\rightarrow\theta_t^{\prime}a_{s,n}^{\prime\prime}$ $(1\leq s\leq k,\ 1\leq n\leq n_s^\ast)$,
        and $b_1\rightarrow \theta_t^\prime b_1^{\prime}$.
      \item[CT6$(k)$-$i~(1\leq i\leq k)$]
        For $1\leq t\leq n_i^\ast$,
        $\theta_j^{\prime}\rightarrow a_{i,t}^{\prime}\theta_j^{\prime\prime}$ $(1\leq j \leq r)$,
        $a_{s,n}^{\prime}\rightarrow a_{i,t}^{\prime}a_{s,n}^{\prime\prime}$ $(1\leq s\leq k,\ s\neq i,\ 1\leq n\leq n_s^\ast)$,
        $a_{i,n}^{\prime}\rightarrow a_{i,t}^{\prime}a_{i,n}^{\prime\prime}$ $(1\leq n\leq n_i^\ast,\ n\neq t)$,
        and $b_1\rightarrow a_{i,t}^{\prime} b_1^{\prime}$.
      \item[CT7$(k)$]
        $\theta_j^{\prime}\rightarrow b_1\theta_j^{\prime\prime}$ $(1\leq j \leq r)$
        and $a_{s,n}^{\prime}\rightarrow b_1a_{s,n}^{\prime\prime}$ $(1\leq s\leq k,\ 1\leq n\leq n_s^\ast)$.
    \end{description}

    Note that CT5(1)--CT7(1) coincide with CT5--CT7 in Step~3-1.
    Proceeding as in the case $k=1$, we find that, in the coordinate neighborhoods obtained after carrying out the coordinate transformations up to Step~3-$k$,
    \begin{align*}
      &\inf_{Q}
      \min_{j}{\frac{h_j^{(Q)}+1}{k_j^{(Q)}}}\\
      =&\min\left\{
      \frac{m_kr+\beta+\sum_{s=1}^{k-1}(m_{k}-m_s)n_s^\ast}{2m_k},
      \frac{m_{k+1}r+\beta+\sum_{s=1}^k(m_{k+1}-m_s)n_s^\ast}{2m_{k+1}}
      \right\}.
    \end{align*}
    A necessary and sufficient condition for these two quantities to be equal is
    $\beta=\sum_{s=1}^k m_s n_s^\ast$.
    
    Proceeding inductively on $k$ in the same manner as in Step~3-1, we find that, at each stage $k=2,\ldots,L-1$, 
    the only coordinate neighborhood in which normal crossings have not yet been obtained is the one arising after applying CT7$(k)$ exactly $m_{k+1}-m_k$ times.
    After completing these steps and relabeling the transformed coordinates
    $
      \theta_1^{\prime\prime},\ldots,\theta_{r}^{\prime\prime},\ a_{s,n}^{\prime\prime}
    $
    as
    $
      \theta_1^{\prime},\ldots,\theta_{r}^{\prime},\
      a_{s,n}^\prime
      \quad (1\leq s\leq L-1,\ 1\leq n\leq n_s^\ast),
    $
    we can express $f$ as follows:
    \begin{equation}\label{eq_pf_mainthm_step3-k}
      \begin{aligned}
        f =& b_1^{m_L}
        \Biggl\{
          \sum_{i=1}^{r} \theta_i^{\prime} D_i(a^\prime)
          + \sum_{s=1}^{L}\sum_{n=1}^{n_s^\ast} a_{s,n}^{\prime}Z_{s,n}
          + \sum_{(s,n)\in \mathcal{S}} k_{s,n}^{(L)}Z_{s,n}
          \\
        &\quad\quad
          + b_1^{m_L}\sum_{s=1}^\infty \tilde{h}^{(L)}_s W_s
          + \sum_{\substack{j+l\geq 2\\j\geq1,l\geq0}}b_1^{m_L(j-1)+l} h_{j,l}^{(L)}(\theta^\prime)W_{j,l}^{\prime\prime}
        \Biggr\}.
      \end{aligned}
    \end{equation}
    Here,
    $
      k_{s,n}^{(L)}\in \text{ideal }
      \left(b_1^{m_L-m_s}a_{s,n}^\prime \mid 1\leq s\leq L,\ 1\leq n\leq n_s^\ast\right),
    $
    and
    $
      \tilde{h}^{(L)}_s\in \text{ideal }
      (a_{s,n}^\prime \mid 1\leq s\leq L,\ 1\leq n\leq n_s^\ast)^2.
    $
    Moreover, $h_{j,l}^{(L)}(\theta^\prime)$ remains homogeneous of degree $j$ in $\theta^\prime$.

  \subsection*{Step 4: Blow-up centered at $(\theta^\prime,a_{1,\cdot}^\prime,\ldots,a_{L,\cdot}^\prime)=0$}
    We perform each of the following two types of coordinate transformations, CT8 and CT9, once.
    \begin{description}
      \item[CT8]
        For $1\leq t\leq r$,
        $\theta_j^{\prime}\rightarrow \theta_t^{\prime}\theta_j^{\prime\prime}$ $(1\leq j \leq r,\ j\neq t)$
        and $a_{s,n}^{\prime}\rightarrow \theta_t^{\prime}a_{s,n}^{\prime\prime}$ $(1\leq s\leq L,\ 1\leq n\leq n_s^\ast)$.
      \item[CT9-$i~(1\leq i \leq L)$]
        For $1\leq t\leq n_i^\ast$,
        $\theta_j^{\prime}\rightarrow a_{i,t}^{\prime}\theta_j^{\prime\prime}$ $(1\leq j \leq r)$,
        $a_{s,n}^{\prime}\rightarrow a_{i,t}^{\prime}a_{s,n}^{\prime\prime}$ $(1\leq s\leq L,\ s\neq i,\ 1\leq n\leq n_s^\ast)$,
        and $a_{i,n}^{\prime}\rightarrow a_{i,t}^{\prime}a_{i,n}^{\prime\prime}$ $(1\leq n\leq n_i^\ast,\ n\neq t)$.
    \end{description}   
  
    After applying CT8 once, (\ref{eq_pf_mainthm_step3-k}) becomes
    \begin{align*}
      f =& b_1^{m_L}\theta^{\prime}_1
      \Biggl\{
        D_1(a^\prime)+
        \sum_{i=2}^{r} \theta_i^{\prime\prime} D_i(a^\prime)
        + \sum_{s=1}^L \sum_{n=1}^{n_s^\ast} a_{s,n}^{\prime\prime} Z_{s,n}
        + \sum_{n=n_L^\ast+1}^{n_L} k_{L,n}^\prime Z_{L,n}
        + b_1\times(\cdots)
      \Biggr\}\\
      =& b_1^{m_L}\theta_1^{\prime}\tilde{f}
      ~;\\
      \left.\tilde{f}(X)\right|&_{(b_1,\theta^{\prime}_1)=0}
      =
      X_1+
      \sum_{i=2}^{r} \theta_i^{\prime\prime} X_i
      + \sum_{s=1}^L \sum_{n=1}^{n_s^\ast} a_{s,n}^{\prime\prime} Z_{s,n}
      + \sum_{n=n_L^\ast+1}^{n_L} k_{L,n}^\prime Z_{L,n}
      \not\equiv0
      \quad\text{in }L^2(q).
    \end{align*}
    Here the terms indexed by $(s,n)\in\mathcal S$ with $s\geq L+1$ are absorbed into $b_1\times(\cdots)$, since $m_s-m_L\geq 1$ for such $s$.
    Hence, normal crossings are obtained.
    Since the Jacobian of this coordinate transformation is
    $b_1^{m_Lr+\beta+\sum_{s=1}^{L-1}(m_L-m_s)n_s^\ast-1}\theta_1^{\prime r+\sum_{s=1}^L n_s^\ast-1}$,
    we obtain
    \[
      \inf_{Q}
      \min_{j}{\frac{h_j^{(Q)}+1}{k_j^{(Q)}}}
      =\min\left\{
      \frac{m_Lr+\beta+\sum_{s=1}^{L-1}(m_L-m_s)n_s^\ast}{2m_L}, 
      \frac{r+\sum_{s=1}^L n_s^\ast}{2}
      \right\}.
    \]
    The case of CT9-$i$ is analogous.  
    Therefore, we obtain \eqref{eq_pf_mainthm00}.
    This completes the proof.

\end{document}